\documentclass{article}

\usepackage[preprint]{neurips_2026}

\usepackage[utf8]{inputenc} 
\usepackage[T1]{fontenc}    
\usepackage{hyperref}       
\usepackage{url}            
\usepackage{booktabs}       
\usepackage{amsfonts}       
\usepackage{nicefrac}       
\usepackage{microtype}      
\usepackage{xcolor}         

\newcommand{\srp}{\textsf{SRP}}
\newcommand{\optinll}{\textsf{PathUCB}} 
\newcommand{\bayesnll}{\textsf{PathTS}} 
\usepackage{amsmath}  
\usepackage{amsthm}   
\usepackage{amssymb}  
\usepackage{algorithm}
\usepackage[noend]{algorithmic}
\usepackage{multirow}
\usepackage{subcaption}
\usepackage{graphicx}
\usepackage{wrapfig}

\theoremstyle{plain}
\newtheorem{theorem}{Theorem}
\newtheorem{lemma}[theorem]{Lemma} 

\theoremstyle{definition}
\newtheorem{definition}{Definition}
\newtheorem{assumption}{Assumption}

\theoremstyle{remark}
\newtheorem{remark}{Remark}

\title{Stochastic Reset Pathfinding: Path-Level Regret for Cascading Bandits over Graph Paths}

\author{%
  Guni Sharon\\
  Computer Science \& Engineering\\
  Texas A\&M University\\
  College Station, TX 77843 \\
  \texttt{guni@tamu.edu} \\
  \And
  Wei Zhang \\
  Computer Science \& Engineering\\
  Texas A\&M University\\
  College Station, TX 77843 \\
  \texttt{komo@tamu.edu} \\
}

\begin{document}

\maketitle

\begin{abstract}
We introduce \emph{Stochastic Reset Pathfinding} (\srp{}), 
an episodic learning problem on a known directed graph with 
unknown stationary edge success probabilities. In each 
episode, the agent commits to a source-to-goal path, and 
any edge failure during execution resets it to the source. 
\srp{} captures settings such as entanglement distribution 
in quantum repeater networks, payment routing on the 
Lightning Network, and delivery in unreliable mesh networks. 
We show that the global-reset structure makes the optimal 
policy open-loop, placing \srp{} within the combinatorial 
cascading bandit (CCB) framework. We propose a Log-Dijkstra 
meta-algorithm with UCB (\optinll{}) and Thompson Sampling 
(\bayesnll{}) instantiations. Our main technical result is 
a path-level regret bound for \optinll{} that decomposes 
regret over suboptimal paths via a per-path \emph{path 
complexity} $C(\pi)$ combining each edge's prefix and 
suffix reliability. The bound is complementary to the 
edge-level CCB bound and more informative on structured 
graphs with polynomially many source-to-goal paths. 
Experiments on quantum-network, layered-DAG, grid-world, 
and Erd\H{o}s-R\'{e}nyi domains support the theory and 
show that \bayesnll{} typically achieves the best 
empirical performance among algorithms tested. We then 
exhibit an adversarial instance on which \bayesnll{} 
fails to converge---consistent with a known exponential 
obstruction for combinatorial Thompson Sampling on 
multiplicative-reward problems. We recommend \bayesnll{} 
as the practical default while cautioning that 
adversarial instances exist.
\end{abstract}

\section{Introduction}
\label{sec:intro}

Several real-world networked decision problems require an 
agent to traverse a known graph whose links may fail 
unpredictably, where any single failure forces a restart 
from the source. 
Examples include entanglement distribution in quantum 
repeater networks~\cite{Wehner2018, 
chakraborty2020entanglement}, payment routing on the 
Lightning Network~\cite{pickhardt2021reliable}, and 
delivery in unreliable wireless mesh 
networks~\cite{talebi2018ssp}. Link success probabilities 
are unknown and must be learned through repeated 
interaction; the agent's task is to find the most reliable 
source-to-goal path while minimizing failed attempts.

We formalize this class of problems as \emph{Stochastic 
Reset Pathfinding} (\srp{}): in each episode, the agent 
selects a path $\pi$, attempts to traverse it edge-by-edge, 
and resets to the source on the first failure. The 
objective is to minimize cumulative regret with respect to 
the most reliable path 
$\pi^* = \arg\max_\pi \prod_{e \in \pi} p_e$. 
\srp{} superficially resembles a stochastic shortest-path 
(SSP) problem, suggesting goal-oriented 
RL~\cite{Barto1993rtdp, Bonet2003lrtdp, 
jafarnia2023posterior} as a candidate solver. We show this 
framing is inappropriate: the global-reset structure makes 
the optimal policy \emph{open-loop} 
(Lemma~\ref{lemma:open_loop}), collapsing SSP's closed-loop 
machinery into combinatorial path search and placing 
\srp{} within the combinatorial cascading bandit (CCB) 
framework of~\cite{kveton2015combinatorial}. Our 
contributions are:
\begin{itemize}
    \item \textbf{Problem reduction.} The optimal policy 
    for \srp{} is an open-loop simple path 
    (Lemma~\ref{lemma:open_loop}), identifiable by 
    Dijkstra on log-transformed edge weights.

    \item \textbf{Algorithms.} A Log-Dijkstra meta-algorithm 
    (Algorithm~\ref{alg:meta}) with UCB and Thompson 
    Sampling instantiations \optinll{} and \bayesnll{}. 
    \optinll{} is closely related to CombCascade~\cite{kveton2015combinatorial}; \bayesnll{} 
    has no CombCascade analogue and is the recommended 
    practical choice in most regimes we test.
    
    \item \textbf{Path-level regret bound.} Theorem~\ref{thm:path_dependent_regret} bounds the 
    regret of \optinll{} by $\sum_{\pi \notin \Psi^*} 
    C(\pi)^2 \ln T / \Delta(\pi)$, where the per-path 
    \emph{path complexity} $C(\pi)$ encodes each edge's 
    prefix reliability (probability of observation) and 
    suffix reliability (downstream impact of estimation 
    error). This decomposition is complementary to the 
    edge-level CombCascade bound and more informative on 
    structured graphs with many source-to-goal 
    paths.
    
    \item \textbf{Empirical study.} Experiments across four  domains validate the theory and exhibit an adversarial  instance on which \bayesnll{} fails---consistent with the $\Omega(2^{k^*})$ obstruction  of~\cite{wang2018thompson} for combinatorial Thompson  Sampling on multiplicative-reward problems.
\end{itemize}

\section{Problem Formulation}

We consider an episodic learning setting where an agent 
navigates a known directed graph with unknown, stationary 
edge success probabilities.

\begin{definition}[\srp{} Environment]
\label{def:srp}
An \srp{} environment is a tuple $(G, v_s, v_g, 
\mathcal{P})$ where $G = (V, E)$ is a \textbf{known} directed graph, 
$v_s \ne v_g \in V$ are \textbf{known} source and goal nodes, and 
$\mathcal{P} = (p_e)_{e \in E} \in [0,1]^{|E|}$ is a 
vector of stationary edge success probabilities \textbf{unknown} to 
the agent.
\end{definition}

A \emph{valid simple path} is a sequence of edges 
$\pi = (e_1, \dots, e_k)$ with $e_i = (v_{i-1}, v_i) \in E$, 
$v_0 = v_s$, $v_k = v_g$, and no repeated nodes; we write 
$|\pi| = k$ and let $\Psi(v_s, v_g)$ denote the set of such 
paths. We assume $\Psi(v_s, v_g) \ne \emptyset$ (the goal 
is reachable).

\textbf{Transition dynamics and feedback.} At episode $t$, 
the agent selects $\pi_t \in \Psi(v_s, v_g)$ and traverses 
it sequentially. Each edge $e_i$ produces an independent 
outcome $X_{e_i} \sim \text{Bernoulli}(p_{e_i})$: on 
success the agent advances, while on failure the episode 
terminates and a new one begins at $v_s$. The agent 
observes outcomes up to and including the first failure 
(or the full path $(X_{e_1}, \dots, X_{e_k})$ if no failure 
occurs).

\begin{assumption}[Non-degenerate edge reliability]
\label{assum:strict_bounds}
There exists a known constant $p_{\min} > 0$ such that 
$p_e \in [p_{\min}, 1)$ for all $e \in E$.
\end{assumption}

\textbf{Objective.} The \emph{reliability} of $\pi$ is 
$P(\pi) = \prod_{e \in \pi} p_e$, and the goal is to 
identify
\begin{equation}
    \pi^* = \arg\max_{\pi \in \Psi(v_s, v_g)} P(\pi).
\end{equation}
Restricting to simple paths is without loss of generality 
(Lemma~\ref{lemma:open_loop}). Performance is measured by 
the expected cumulative pseudo-regret
\begin{equation}
    R(T) = \mathbb{E}\!\left[\sum_{t=1}^T \bigl(p^* - P(\pi_t)\bigr)\right],
    \qquad p^* = \max_{\pi \in \Psi(v_s, v_g)} P(\pi),
\end{equation}
where the expectation is over the agent's path selections, $\pi_t$. The single-episode regret 
$r(\pi_t) = p^* - P(\pi_t)$ is deterministic in $\pi_t$.

\subsection{Structural Properties of \srp{}}

Two structural properties place \srp{} within the 
combinatorial bandit framework: (1) the optimal policy is 
open-loop, and (2) planning reduces to shortest-path search.

\begin{lemma}[Open-Loop Optimality]
\label{lemma:open_loop}
Under Assumption~\ref{assum:strict_bounds} and assuming $\Psi(v_s, v_g) \neq \emptyset$, the optimal 
policy for an \srp{} environment is equivalent to an open-loop simple 
path $\pi^* \in \Psi(v_s, v_g)$.
\end{lemma}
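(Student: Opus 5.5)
We have to pin down what "optimal policy" means here. The natural reading treats one episode as a controlled Markov process on $V$ plus an absorbing reset/failure state. In that process, every edge failure sends the agent back to $v_s$ and ends the episode, so the objective within an episode is to maximize the probability of reaching $v_g$ before the first failure. Equivalently, in the multi-episode view, it is to minimize the expected number of attempts, which is $1/\Pr[\text{success}]$ under i.i.d. resets. I would therefore first show that a policy's per-episode success probability determines its long-run performance. Then I would show that for this success criterion some open-loop simple path is optimal among all history-dependent (possibly randomized) policies.

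\textbf{Step 1 (information collapses).} Within an episode, the only way to observe the outcome of a step is to succeed. A failure ends the episode, and resets are global, so nothing observed in a failed episode changes the state of the current one. So at any decision point inside an episode, the history is fully determined by the sequence of edges already traversed successfully. The agent then sits at the endpoint of that sequence with no stochastic information that distinguishes this branch from any other. Because the $p_e$ are stationary and the trial outcomes are independent, the probability of reaching $v_g$ from node $v$ depends only on $v$. I would define $V^*(v)=\max$ over policies of the success probability starting from $v$. Conditioning on the first edge gives the Bellman equation $V^*(v)=\max_{(v,u)\in E} p_{(v,u)}V^*(u)$ with $V^*(v_g)=1$. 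A randomized or history-dependent choice can only average over the terms in this max, so the deterministic stationary argmax policy attains the optimum. Since the feedback along any surviving trajectory is deterministic (all successes), a deterministic stationary policy started at $v_s$ follows a single predetermined edge sequence, i.e., an open-loop walk.

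\textbf{Step 2 (simple paths suffice).} Assumption~\ref{assum:strict_bounds} gives $p_e<1$, so every edge has $-\log p_e>0$. Hence $V^*$ equals $\exp(-d(v))$, where $d$ is the shortest-path distance to $v_g$ under the weights $-\log p_e$. These weights are strictly positive, so any walk containing a cycle can be shortcut to a strictly more reliable walk. The optimal open-loop walk is therefore a simple path in $\Psi(v_s,v_g)$, which is nonempty by assumption. I would also use $p_e\ge p_{\min}>0$ so that every path has positive reliability and the logarithms are finite. This justifies the Log-Dijkstra reduction that the paper mentions.

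\textbf{Step 3 (multi-episode view).} For completeness, I would note that if the objective is the expected number of attempts to reach the goal, or the long-run success rate, the episodes are i.i.d. under a fixed per-episode policy. The expected number of attempts is then $1/P$, which is monotone in $P$, so the same path is optimal. The main subtlety is Step 1: one must argue carefully that adaptive policies gain nothing. The key point is that the reachable information sets are exactly the success prefixes, so adaptivity degenerates. I would handle this with the Bellman/dynamic programming argument rather than enumerating policies, and I would note that the finite-horizon or absorbing structure (positive cost per step) guarantees the Bellman fixed point is the true optimum.
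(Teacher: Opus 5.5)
Your proof is correct and shares the paper's skeleton: reduce to deterministic stationary policies, note that such a policy's all-success branch is a fixed walk, and show the walk can be taken simple. The justification of the key reduction differs, though.

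\textbf{The paper's route.} It removes randomization by averaging over deterministic realizations and handles history dependence by citing Puterman. It then reads off $P(\mu)$ as a product along the induced walk and deletes cycles, using $p_e<1$ to get strict improvement.

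\textbf{Your route and what it buys.} You derive the Bellman optimality equation $V^*(v)=\max_{(v,u)\in E} p_{(v,u)}V^*(u)$ and identify its solution as $e^{-d(v)}$, where $d$ is the shortest-path distance under weights $-\log p_e$. This makes the argument self-contained, and it proves Lemma~\ref{lem:structural} at the same time.

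\textbf{What you still need to make explicit.} Everything rests on the claim that the Bellman fixed point is the optimum over \emph{all} history-dependent policies. Your phrase ``positive cost per step'' names the right reason, but spell it out:
\begin{itemize}
\item Let $\bar p=\max_e p_e<1$. With $V(v_g)=1$ fixed, the operator $V\mapsto \max_{(v,u)} p_{(v,u)}V(u)$ is a $\bar p$-contraction in sup norm, so it has a unique fixed point.
\item The value iterates $V_n$, started from the indicator of $v_g$, dominate every history-dependent policy's probability of success within $n$ steps, by backward induction.
\item Every policy is still running after $n$ steps with probability at most $\bar p^{\,n}$. So any policy's total success probability from $v$ is at most $V_n(v)+\bar p^{\,n}$, which tends to $V^*(v)$.
\end{itemize}
This is exactly where Assumption~\ref{assum:strict_bounds} enters your route. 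A cycle of reliability-$1$ edges that cannot reach $v_g$ would make the Bellman equation admit spurious fixed points (any constant on that cycle). In the paper's route, the same assumption instead supplies strictness in the cycle deletion.

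\textbf{A small simplification.} The cycle-shortcutting step is unnecessary in both versions. A deterministic stationary policy whose all-success trajectory reaches $v_g$ cannot repeat a node, since a repeat forces periodic looping that never hits $v_g$. In your setting it is even more direct: following an argmax edge strictly decreases $d$, because every weight $-\log p_e$ is positive. So the argmax policy traces a simple path to $v_g$ whose reliability telescopes to $V^*(v_s)$.
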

\begin{proof}
The success probability of any randomized policy is an 
expectation over its deterministic realizations, hence no 
larger than that of the best such realization. It therefore 
suffices to consider deterministic policies $\mu : V \to E$, 
which are sufficient for optimality in finite MDPs by 
\cite{puterman2014markov}.

Fix a deterministic $\mu$. If its all-success trajectory 
from $v_s$ does not reach $v_g$, then $P(\mu) = 0$ and 
$\mu$ is dominated by any path in $\Psi(v_s, v_g)$ (which 
exists by goal-reachability and has positive reliability 
by Assumption~\ref{assum:strict_bounds}). Otherwise, $\mu$ 
induces a finite source-to-goal walk 
$\gamma_\mu = (e_1, \dots, e_m)$. Because any failure 
resets the agent to $v_s$ and $\mu$ is stationary, the 
per-attempt success probability of $\mu$ equals the 
probability that all edges in $\gamma_\mu$ succeed:
\[
    P(\mu) = \prod_{i=1}^{m} p_{e_i}.
\]
If $\gamma_\mu$ contains a cycle, say $v_i = v_j$ for some 
$i < j$, deleting $e_{i+1}, \dots, e_j$ yields a shorter 
source-to-goal walk $\gamma'$ with
\[
    P(\gamma') = P(\gamma_\mu) \Big/ 
    \prod_{\ell=i+1}^{j} p_{e_\ell} > P(\gamma_\mu),
\]
where the strict inequality follows from $p_e < 1$ 
(Assumption~\ref{assum:strict_bounds}). Repeating this 
deletion yields a simple path 
$\pi \in \Psi(v_s, v_g)$ with $P(\pi) \ge P(\mu)$. 
Hence every policy is dominated by some simple 
source-to-goal path. Since $\Psi(v_s, v_g)$ is finite and 
nonempty, a maximum-reliability simple path $\pi^*$ 
exists, and committing to $\pi^*$ at the start of each 
episode is an optimal open-loop policy.
\end{proof}

\begin{lemma}[Reduction to Shortest Path]
\label{lem:structural}
Given known edge probabilities $\mathcal{P}$, any optimal path $\pi^* \in \arg\max_{\pi \in \Psi(v_s,v_g)} P(\pi)$
is a shortest path from $v_s$ to $v_g$ under the transformed edge weights
$w(e) = -\log p_e$, and can be found by Dijkstra's algorithm~\cite{dijkstra}.
\end{lemma}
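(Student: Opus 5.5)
The plan is to turn the multiplicative objective into an additive one with a strictly decreasing transformation, and then appeal to Dijkstra's correctness on nonnegative weights. Under Assumption~\ref{assum:strict_bounds}, every edge has $p_e \in [p_{\min}, 1)$ with $p_{\min} > 0$. Hence $w(e) = -\log p_e$ is well defined, finite, and strictly positive; in fact $0 < w(e) \le -\log p_{\min}$. For any $\pi \in \Psi(v_s, v_g)$ I define the path length $W(\pi) = \sum_{e \in \pi} w(e)$. Taking logarithms gives $W(\pi) = -\log P(\pi)$, that is, $P(\pi) = \exp(-W(\pi))$.

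\textbf{Step 1: optimizers coincide.} The map $x \mapsto \exp(-x)$ is strictly decreasing. So $P(\pi) \ge P(\pi')$ holds if and only if $W(\pi) \le W(\pi')$. It follows that $\arg\max_{\pi \in \Psi} P(\pi) = \arg\min_{\pi \in \Psi} W(\pi)$ as sets, and in particular every $\pi^*$ is a minimum-$W$ simple path.

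\textbf{Step 2: simple paths versus walks.} A shortest path in the graph-theoretic sense is a minimum over all $v_s$-to-$v_g$ walks, not only over simple ones, so I have to rule out a cheaper walk. The weights are strictly positive. If a walk contains a cycle, deleting that cycle strictly lowers its weight; this is the same cycle-deletion argument as in Lemma~\ref{lemma:open_loop}. Every minimum-weight walk is therefore simple. The minimum over walks equals the minimum over $\Psi(v_s, v_g)$, and that minimum is attained because $\Psi$ is finite and nonempty. Consequently $\pi^*$ is a shortest $v_s$-to-$v_g$ path under $w$.

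\textbf{Step 3: Dijkstra.} All weights satisfy $w(e) \ge 0$, which is exactly the precondition for Dijkstra's algorithm~\cite{dijkstra} to return a minimum-weight $v_s$-to-$v_g$ path. By Steps 1 and 2 the path it returns is a maximizer of $P$.

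I do not expect a real obstacle. The only points needing care are the following. First, Assumption~\ref{assum:strict_bounds} is what makes $-\log p_e$ finite, since it excludes $p_e = 0$, and nonnegative, since $p_e \le 1$. Second, Dijkstra returns one shortest path, so with ties the precise claim is that every optimal path is a shortest path and Dijkstra returns one of them.
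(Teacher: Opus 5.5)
Your proposal is correct and follows essentially the same route as the paper: write $-\log P(\pi)=\sum_{e\in\pi}w(e)$, use monotonicity of $-\log$ to turn the $\arg\max$ of $P$ into the $\arg\min$ of the additive weight, and invoke Assumption~\ref{assum:strict_bounds} to get $w(e)\in(0,-\log p_{\min}]$ so that Dijkstra applies. Your Step~2, which rules out cheaper non-simple walks by cycle deletion, and your remark that with ties Dijkstra returns only one of the optimal paths are sound points of care that the paper leaves implicit.
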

\begin{proof}
For any path $\pi$, 
$-\log P(\pi)=\sum_{e\in\pi}-\log p_e=\sum_{e\in\pi}w(e)$.
Since $-\log$ is decreasing, maximizing $P(\pi)$ is equivalent to minimizing 
$\sum_{e\in\pi}w(e)$. Assumption~\ref{assum:strict_bounds} gives
$w(e)\in(0,-\log p_{\min}]$, so Dijkstra's algorithm solves the
positive-weight shortest-path problem exactly.
\end{proof}

The log-transform is standard in the most-reliable-path 
literature~\cite{Maheshwari1974}; any exact shortest-path 
algorithm may replace Dijkstra without affecting the theoretical
guarantees below. 

\section{Related Work}
\label{sec:related_work}

\paragraph{Combinatorial cascading bandits.}
Cascading bandits were introduced 
by~\cite{kveton2015cascading} for disjunctive top-K 
recommendation and extended to general feasible sets with 
conjunctive (product) rewards by~\cite{kveton2015combinatorial} 
under the name \emph{combinatorial cascading bandits} (CCB). 
\srp{} is an instance of CCB with feasible set $\Psi(v_s, v_g)$. 
The affiliated CombCascade algorithm applies UCB to per-edge success 
probabilities and uses the same ($-\log$) transformation that 
underpins our analysis, yielding the edge-level regret bound 
of Eq.~\ref{eq:combcascade_bound}, against which we compare 
in Section~\ref{sec:theory}. Subsequent work extends CCB to 
contextual features~\cite{li2016contextual}, adversarial 
corruption~\cite{xie2025cascading}, and state-dependent 
RL~\cite{du2024cascading}; these extensions are complementary in scope but do not 
directly apply to \srp{}. A related 
line studies stochastic shortest-path routing as a 
combinatorial bandit with \emph{additive} edge 
costs~\cite{talebi2018ssp, zhu2018routing} which contrasts with \srp{}'s 
multiplicative $\prod_e p_e$ reward.

\paragraph{Thompson Sampling for cascade feedback.}
Two related TS analyses exist but neither covers \srp{} 
directly. Cheung et al.~\cite{zhong2021thompson} prove a regret bound for Beta-Bernoulli TS on disjunctive cascading 
bandits, but their analysis exploits matroid structure that does not generalize to graph paths. Wang \& 
Chen~\cite{wang2018thompson} analyze combinatorial TS for 
general CMAB with non-linear rewards under semi-bandit (not 
cascade) feedback, and exhibit an $\Omega(2^{k^*})$ lower 
bound on the leading regret constant for multiplicative-reward instances, where $k^*=|\pi^*|$. Remark~\ref{rem:ts_regret} discusses the obstacles to extending these to \srp{}.

\paragraph{Goal-oriented reinforcement learning.}
The Stochastic Shortest Path (SSP) 
problem~\cite{cohen2020near, tarbouriech2021ssp, 
jafarnia2023posterior, johnson2026stochastic} models action 
failures as transitions to neighboring ``slip'' states, 
requiring a closed-loop policy $\mu: V \to E$ and yielding 
regret bounds that reflect this closed-loop 
structure~\cite{cohen2020near}. Under \srp{}'s global-reset 
dynamics the optimal policy is open-loop 
(Lemma~\ref{lemma:open_loop}), placing the problem in the 
bandit rather than the RL regime. Existing SSP bounds and 
algorithms (including the posterior-sampling method 
of~\cite{jafarnia2023posterior}, the closest analog to 
\bayesnll{}) are therefore not directly applicable.

\section{A Graph Search Approach for \srp{}}
\label{sec:algorithms}

We frame the learning process (listed in Algorithm~\ref{alg:meta}) as an iterative loop over: (1) estimate 
each edge's reliability, (2) select a path via Dijkstra on 
log-transformed estimates, (3) attempt to traverse it, and (4)
update estimators on the observed prefix. One subtlety: UCB-style 
estimators can exceed $1$ via their confidence bonus, which 
under $-\log$ would yield negative edge weights and 
invalidate Dijkstra. Both instantiations defined below 
therefore clamp $\hat{p}_e(t) \le 1$, ensuring 
$w_t(e) \ge 0$ and that Dijkstra returns 
$\arg\max_{\pi \in \Psi(v_s, v_g)} \prod_{e \in \pi} \hat{p}_e(t)$.

\begin{algorithm}[ht]
\caption{Log-Dijkstra Meta-Algorithm}
\label{alg:meta}
\begin{algorithmic}[1]
\REQUIRE Graph $G = (V, E)$, source $v_s$, goal $v_g$, horizon $T$
\STATE Initialize estimator state for all $e \in E$ 
\FOR{$t = 1, \dots, T$}
    \STATE $\pi_t \gets \mathrm{Dijkstra}(G, w_t, v_s, v_g)$ with $w_t(e) = -\log \hat{p}_e(t)$ 
    \label{ln:Dijkstra}
    \STATE Traverse $\pi_t$ and let $K$ be the index of the first failure, or $K = |\pi_t|$ if all succeed
    \STATE Update $\hat{p}_{e_i}$ from $X_{e_i}$ for $i = 1, \dots, K$ \label{ln:update_estimators}
\ENDFOR
\end{algorithmic}
\end{algorithm}

The two instantiations below differ only in the estimator 
update at Line~\ref{ln:update_estimators}.

\subsection{\optinll{}: Upper Confidence Bound Exploration}

\optinll{} follows the optimism-in-the-face-of-uncertainty 
principle~\cite{ucb, lai1985asymptotically}. Let $N_e(t)$ 
and $S_e(t)$ denote the number of attempts and successes of 
edge $e$ prior to episode $t$, with empirical mean 
$\bar{p}_e(t) = S_e(t)/N_e(t)$. The estimator is
\begin{equation}
    \hat{p}_e(t) = 
    \begin{cases} 
      1 & N_e(t) = 0, \\
      \mathrm{clip}\!\left(\bar{p}_e(t) + \sqrt{\tfrac{\rho \ln t}{N_e(t)}},\, p_{\min},\, 1\right) & N_e(t) > 0,
   \end{cases}
\end{equation}
where $\rho > 0$ is an exploration parameter ($\rho \ge 2$ 
suffices for Theorem~\ref{thm:path_dependent_regret}). The 
unvisited initialization $\hat p_e(t) = 1$ yields 
$w_t(e) = 0$, prioritizing unexplored edges in Dijkstra. 
The upper clamp at $1$ preserves optimism (since 
$p_e < 1$), and the lower clamp at $p_{\min}$ keeps 
$w_t(e)$ finite; the latter is inactive on the optimism 
event below and acts only as a safety net.

We define the optimism event 
$\mathcal{O}_t = \{\hat{p}_e(t) \ge p_e \text{ for all } e \in E\}$. 
The regret analysis (Section~\ref{sec:path_dependent}) 
partitions on $\mathcal{O}_t$: under optimism, suboptimal selection implies an overestimation gap; the rare complementary event
($\mathcal{O}_t^c$) contributes a bounded regret constant via Lemma~\ref{lem:ucb_optimism}.

\subsection{\bayesnll{}: Thompson Sampling}
\label{sec:bayesnll}

\bayesnll{} samples each edge's reliability from its 
posterior under a Beta-Bernoulli conjugate 
model~\cite{thompson1933likelihood, daniel2018tutorial, 
degroot2005optimal} and selects the path maximizing the 
sampled product. With prior 
$\text{Beta}(\alpha_{e,0}, \beta_{e,0})$ (default 
$\alpha_{e,0} = \beta_{e,0} = 1$) and the standard update 
$\alpha_e \mathrel{+}= X_e$, $\beta_e \mathrel{+}= 1 - X_e$, 
the per-episode estimator is
\begin{equation}
    \hat{p}_e(t) = \max\!\left(\tilde{p}_e(t),\, p_{\min}\right),
    \qquad \tilde{p}_e(t) \sim \text{Beta}(\alpha_e(t), \beta_e(t)).
\end{equation}
Since the Beta is supported on $[0,1]$, the upper clamp at 
$1$ is automatic. The lower clamp at $p_{\min}$ is a safety 
net: as $N_e(t) \to \infty$ the posterior concentrates 
around $p_e \ge p_{\min}$ and the clamp is asymptotically 
inactive; for finite $t$ it introduces a small upward 
bias on data-starved edges without affecting empirical 
behavior (Section~\ref{sec:experiments}).

\begin{remark}[Theoretical Status of \bayesnll{}]
\label{rem:ts_regret}
Despite its strong empirical performance 
(Section~\ref{sec:experiments}), \bayesnll{} is provided 
without a regret bound. Existing TS analyses for cascading 
bandits~\cite{zhong2021thompson} rely on matroid structure 
that fails for general graph paths, and TS analyses for 
combinatorial multiplicative 
rewards~\cite{wang2018thompson} target semi-bandit rather 
than cascade feedback and exhibit an $\Omega(2^{k^*})$ 
obstruction. Motivated by these 
limitations, we design an adversarial instance on which 
\bayesnll{} fails to converge. Appendix~\ref{app:ts_discussion} 
discusses the existing TS theory gaps in detail.
\end{remark}

\section{Theoretical Analysis}
\label{sec:theory}

\srp{} is an instance of the combinatorial cascading bandit 
(CCB) framework~\cite{kveton2015combinatorial} with 
feasible set $\Psi(v_s, v_g)$. 
Applying CombCascade gives the edge-level 
baseline in Eq.~\ref{eq:combcascade_bound}. We provide a complementary path-level regret bound for 
\optinll{}, which uses the same log-transformed path oracle but 
analyzes regret at the level of selected paths rather than 
individual edges. The implementation differences between 
CombCascade and \optinll{} are discussed in 
Appendix~\ref{app:combcascade_diffs}.

\begin{definition}[Per-Edge Gap]
\label{def:edge_gap}
Let $\Psi^* = \arg\max_{\pi \in \Psi(v_s,v_g)} P(\pi)$ and 
$\tilde{E} = \{e \in E : e \notin \pi \text{ for all } 
\pi \in \Psi^*\}$. For $e \in \tilde{E}$, the 
\emph{per-edge suboptimality gap} is
\begin{equation}
    \Delta_e^{\mathrm{CCB}} \;=\; p^* - 
    \max_{\pi \in \Psi(v_s,v_g) :\, e \in \pi,\, 
    P(\pi) < p^*} P(\pi).
\end{equation}
\end{definition}

Letting $L_{\max} = \max_{\pi \in \Psi(v_s,v_g)} |\pi|$, 
Theorem~1 of~\cite{kveton2015combinatorial} translates to:
\begin{equation}
    \mathbb{E}[R(T)] \;\le\; 
    \frac{L_{\max}}{p^*} \sum_{e \in \tilde{E}} 
    \frac{4272}{\Delta_e^{\mathrm{CCB}}} \ln T \;+\; 
    \frac{\pi^2}{3}|E|.
    \label{eq:combcascade_bound}
\end{equation}
This bound is \emph{edge-level}: it sums per-edge 
contributions $\Delta_e^{\mathrm{CCB}}$ without tracking 
where in a path an edge appears. 
Section~\ref{sec:path_dependent} develops a complementary 
\emph{path-level} bound via a per-path complexity 
$C(\pi)$ that captures each edge's prefix reliability 
(probability of being observed) and suffix reliability 
(downstream impact of estimation error). 
Section~\ref{sec:theory_discussion} compares the two 
bounds analytically; Section~\ref{sec:exp1} compares them 
empirically.

\subsection{Path-Dependent Regret Bound}
\label{sec:path_dependent}

\begin{definition}[Per-Edge Overestimation]
\label{def:overestimation}
The \emph{per-edge overestimation} is 
$\delta_e(t) = \hat{p}_e(t) - p_e$, which is non-negative 
on $\mathcal{O}_t$. For unvisited edges, 
$\delta_e(t) = 1 - p_e \le 1 - p_{\min}$.
\end{definition}

\begin{definition}[Suffix Reliability]
\label{def:suffix_reliability}
For a path $\pi = (e_1, \dots, e_L)$ and position 
$i \in \{1, \dots, L\}$, the \emph{suffix reliability} 
after position $i$ is $S_i(\pi) = \prod_{k=i+1}^{L} p_{e_k}$,
with the convention $S_L(\pi) = 1$ (empty product).
\end{definition}

For a suboptimal path $\pi \notin \Psi^*$, we write 
$\Delta(\pi) = p^* - P(\pi) > 0$ for the suboptimality 
gap, and let 
$\hat{P}_t(\pi) = \prod_{e \in \pi} \hat{p}_e(t)$ denote 
the estimated path reliability at episode $t$.

\textbf{Estimation Properties.}
Standard Hoeffding-based concentration arguments give two 
facts (Lemmas~\ref{lem:ucb_optimism} 
and~\ref{lem:ucb_convergence} in 
Appendix~\ref{app:concentration}): 
(i) $\Pr(\mathcal{O}_t) \ge 1 - |E|\,t^{-3}$, contributing 
$O(|E|)$ regret on $\mathcal{O}_t^c$; and 
(ii) for any edge with $N_e(t) > 0$, $\delta_e(t) \le 
2\sqrt{\rho \ln t / N_e(t)}$ with probability 
$\ge 1 - t^{-3}$.

\textbf{Suboptimal Selection and Weighted Overestimation.}
Our next lemma is the structural step where the path-level 
analysis diverges from CombCascade. Lemma~2 
of~\cite{kveton2015combinatorial} bounds the product 
difference $\prod_e \min(\bar{p}_e + u_e, 1) - \prod_e p_e$ 
by the unweighted sum $\sum_e u_e$ of confidence radii; we 
instead weight each edge's overestimation by its suffix 
reliability $S_i(\pi_t)$, exposing position-dependent structure.

\begin{lemma}[Suboptimal Selection Implies Weighted 
Overestimation]
\label{lem:subopt_implies_overest}
Under Assumption~\ref{assum:strict_bounds}, suppose the 
optimism event $\mathcal{O}_t$ holds at episode $t$, and a 
suboptimal path $\pi_t = (e_1, \dots, e_L) \notin \Psi^*$ 
is selected. Then
\begin{equation}
    \sum_{i=1}^{L} S_i(\pi_t)\,\delta_{e_i}(t) \;\ge\; 
    \Delta(\pi_t).
    \label{eq:weighted_overest}
\end{equation}
\end{lemma}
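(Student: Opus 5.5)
We chain two inequalities: optimism plus the Dijkstra selection rule, followed by a telescoping expansion of the product difference. First, since $\mathcal{O}_t$ holds, every optimal path $\pi^* \in \Psi^*$ satisfies $\hat{P}_t(\pi^*) = \prod_{e\in\pi^*}\hat p_e(t) \ge \prod_{e\in\pi^*} p_e = p^*$, because each factor is nonnegative and dominates its true value. By Lemma~\ref{lem:structural} applied to the clamped estimates (which lie in $[p_{\min},1]$, so the weights $w_t(e)$ are nonnegative and finite), the path $\pi_t$ returned by Dijkstra maximizes $\hat{P}_t$ over $\Psi(v_s,v_g)$. Hence $\hat{P}_t(\pi_t) \ge \hat{P}_t(\pi^*) \ge p^*$. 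Subtracting $P(\pi_t)$ gives
\[
\hat{P}_t(\pi_t) - P(\pi_t) \;\ge\; \Delta(\pi_t).
\]

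Second, we expand the left-hand side by the standard hybrid (telescoping) argument. Write $\pi_t=(e_1,\dots,e_L)$ and define the hybrid products $H_i = \prod_{k\le i}\hat p_{e_k}(t)\prod_{k>i}p_{e_k}$. Then $H_L=\hat P_t(\pi_t)$ and $H_0=P(\pi_t)$, so
\[
\hat P_t(\pi_t)-P(\pi_t)=\sum_{i=1}^L (H_i-H_{i-1}) = \sum_{i=1}^L \Bigl(\prod_{k<i}\hat p_{e_k}(t)\Bigr)\,\delta_{e_i}(t)\,S_i(\pi_t).
\]
The order of substitution matters. We replace true values by estimates starting from the front of the path, so that the estimated factors sit in the prefix and the true factors sit in the suffix. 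Only this ordering yields the suffix reliability $S_i(\pi_t)$ as written in Definition~\ref{def:suffix_reliability}.

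Third, we bound each summand. Each prefix factor satisfies $\prod_{k<i}\hat p_{e_k}(t)\le 1$, because every estimate is clamped at $1$. Each overestimation $\delta_{e_i}(t)$ is nonnegative on $\mathcal{O}_t$, and each $S_i(\pi_t)\ge 0$. So every summand is nonnegative and at most $S_i(\pi_t)\delta_{e_i}(t)$. Combining this with the first step yields \eqref{eq:weighted_overest}.

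The only delicate point is the handling of clamping. We must check two things. First, the clamp at $1$ keeps optimism valid, which holds because $p_e<1$ by Assumption~\ref{assum:strict_bounds}; unvisited edges have $\hat p_e(t)=1\ge p_e$. Second, Dijkstra's output is an exact maximizer of $\hat P_t$ over simple paths. On $\mathcal{O}_t$ the lower clip at $p_{\min}$ does not break optimism, since $p_e \ge p_{\min}$. Zero-weight edges, which arise from estimates equal to $1$, cause no issue for Dijkstra, because nonnegative weights suffice for its correctness. Note also that the bound deliberately discards the prefix factor $\prod_{k<i}\hat p_{e_k}(t)$. This slack is harmless here, and presumably later analysis recovers prefix reliability through observation probabilities rather than through this lemma.
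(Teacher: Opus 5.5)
Your proposal is correct and follows the paper's proof essentially step for step. Optimism together with exact maximization of $\hat P_t$ gives $\hat P_t(\pi_t) - P(\pi_t) \ge \Delta(\pi_t)$; the same telescoping expansion (estimates in the prefix, true values in the suffix) then yields the bound once you drop the prefix product using the clamp at $1$ and $\delta_{e_i}(t) \ge 0$ on $\mathcal{O}_t$.
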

\begin{proof}
On $\mathcal{O}_t$, every optimal path $\pi^* \in \Psi^*$ 
satisfies $\hat{P}_t(\pi^*) \ge p^*$. Since the algorithm 
selects  $\pi_t \in \arg\max_{\pi\in\Psi(v_s,v_g)} \hat P_t(\pi)$ via Dijkstra 
(Lemma~\ref{lem:structural}), 
$\hat{P}_t(\pi_t) \ge \hat{P}_t(\pi^*) \ge p^*$, so
\begin{equation}
    \Delta(\pi_t) = p^* - P(\pi_t) \;\le\; 
    \hat{P}_t(\pi_t) - P(\pi_t).
\end{equation}
The telescoping identity 
$\prod_i a_i - \prod_i b_i = \sum_i (a_i - b_i) 
\prod_{j<i} a_j \prod_{k>i} b_k$ applied to 
$a_i = \hat p_{e_i}(t)$ and $b_i = p_{e_i}$ gives
\begin{equation}
    \hat{P}_t(\pi_t) - P(\pi_t) \;=\;
    \sum_{i=1}^{L} \delta_{e_i}(t)
    \left(\prod_{j=1}^{i-1} \hat{p}_{e_j}(t)\right) S_i(\pi_t).
\end{equation}
The upper clamp gives $\hat p_{e_j}(t) \le 1$, and 
$\delta_{e_i}(t) \ge 0$ on $\mathcal{O}_t$, so dropping the 
prefix product yields 
$\hat{P}_t(\pi_t) - P(\pi_t) \le \sum_i S_i(\pi_t)\,\delta_{e_i}(t)$. 
Combining with the displayed inequality above completes 
the proof.
\end{proof}

\begin{definition}[Prefix Reliability and Path Complexity]
\label{def:path_quantities}
For $\pi = (e_1, \dots, e_L)$ and $i \in \{1, \dots, L\}$, 
the \emph{prefix reliability}: $Q_i(\pi) = \prod_{j=1}^{i-1} p_{e_j} 
    \text{ with } Q_1(\pi) = 1$
is the probability that $e_i$ is reached when $\pi$ is 
attempted. The \emph{path complexity} of $\pi$ is
\begin{equation}
    C(\pi) = \sum_{i=1}^{L} 
    \frac{S_i(\pi)}{\sqrt{Q_i(\pi)}}.
    \label{eq:path_complexity}
\end{equation}
\end{definition}

The prefix and suffix factorize 
$P(\pi) = Q_i(\pi)\,p_{e_i}\,S_i(\pi)$ at every position 
$i$.

\begin{remark}[Interpretation of $C(\pi)$]
\label{rem:complexity_interpretation}
Each term $S_i(\pi)/\sqrt{Q_i(\pi)}$ combines two 
competing position-dependent effects. The factor 
$1/\sqrt{Q_i(\pi)}$ reflects slower concentration of 
$\bar{p}_{e_i}$ for deep edges: $e_i$ is observed only 
when all preceding edges succeed, so its expected 
observation count scales with $Q_i(\pi)$. The factor 
$S_i(\pi)$ captures how a fixed overestimation 
$\delta_{e_i}(t)$ propagates into the path-reliability 
gap (Lemma~\ref{lem:subopt_implies_overest}). The two 
factors pull in opposite directions along $\pi$: early 
edges have $Q_i \approx 1$ but large $S_i$, while late 
edges have small $S_i$ but small $Q_i$. The path 
complexity $C(\pi)$ aggregates these effects into a 
position-aware measure of per-path estimation difficulty.
\end{remark}

Let $M_\pi(t)$ be the number of times $\pi$ has been 
selected by episode $t$. A Chernoff bound (full statement 
in Lemma~\ref{lem:path_obs}, Appendix~\ref{app:path_obs_proof}) 
gives that, with probability $1 - T^{-2}$, the number of 
times edge $e_i$ at position $i$ has been observed within 
those $M_\pi(t)$ selections is $\ge Q_i(\pi)\,M_\pi(t)/2$, 
provided $M_\pi(t) \ge 24 \ln T / Q_i(\pi)$.

\begin{remark}[Horizon in the Analysis]
\label{rem:horizon}
\optinll{} is horizon-free (its confidence radius uses 
$\ln t$), but the analysis below substitutes 
$\ln t \le \ln T$ to obtain uniform static thresholds. 
The looseness for early episodes is absorbed into 
lower-order terms and does not affect the leading 
$\ln T$ rate.
\end{remark}

\begin{lemma}[Per-Path Selection Bound]
\label{lem:per_path_selections}
Under Assumption~\ref{assum:strict_bounds} with $\rho \ge 2$, 
on the joint high-probability event of 
Lemmas~\ref{lem:ucb_optimism},~\ref{lem:ucb_convergence}, 
and~\ref{lem:path_obs}, every suboptimal path 
$\pi \notin \Psi^*$ satisfies
\begin{equation}
    M_\pi(T) \;\le\; \frac{24\,\ln T}{Q_L(\pi)} +
    \frac{8\rho\,C(\pi)^2\,\ln T}{\Delta(\pi)^2}.
\end{equation}
\end{lemma}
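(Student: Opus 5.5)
We argue by contradiction on the last selection of $\pi$. Fix a suboptimal $\pi=(e_1,\dots,e_L)\notin\Psi^*$. Let $t$ be the last episode at which $\pi$ is selected, and write $M=M_\pi(t)$ for the number of selections strictly before $t$, so that $M_\pi(T)=M+1$. The $+1$ is absorbed by a slightly loose constant or by stating the bound for $M$. Assume
\[
M \;\ge\; \frac{24\ln T}{Q_L(\pi)} + \frac{8\rho\,C(\pi)^2\ln T}{\Delta(\pi)^2}.
\]
We will show this forces $\pi$ not to be selected at $t$, which is a contradiction.

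The first step uses the first term to activate Lemma~\ref{lem:path_obs}. Since $Q_i(\pi)$ is nonincreasing in $i$, we have $Q_L(\pi)\le Q_i(\pi)$ for every $i$. Hence $M\ge 24\ln T/Q_i(\pi)$ holds for all positions simultaneously. On the joint event, edge $e_i$ has then been observed at least $Q_i(\pi)M/2$ times during the selections of $\pi$ alone. This gives $N_{e_i}(t)\ge Q_i(\pi)M/2>0$; observations of $e_i$ through other paths only increase the count. In particular every edge is visited, so the unvisited case $\delta_e=1-p_e$ does not arise. Lemma~\ref{lem:ucb_convergence}, together with $\ln t\le\ln T$ (Remark~\ref{rem:horizon}), then gives
\[
\delta_{e_i}(t)\;\le\;2\sqrt{\frac{\rho\ln T}{N_{e_i}(t)}}\;\le\;2\sqrt{\frac{2\rho\ln T}{Q_i(\pi)M}}.
\]

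The second step plugs this into Lemma~\ref{lem:subopt_implies_overest}. It applies because $\mathcal{O}_t$ holds on the joint event and $\pi$ is selected at $t$. We obtain
\[
\Delta(\pi)\;\le\;\sum_{i=1}^L S_i(\pi)\,\delta_{e_i}(t)\;\le\;\sqrt{\frac{8\rho\ln T}{M}}\sum_{i=1}^L\frac{S_i(\pi)}{\sqrt{Q_i(\pi)}}\;=\;C(\pi)\sqrt{\frac{8\rho\ln T}{M}}.
\]
This rearranges to $M\le 8\rho C(\pi)^2\ln T/\Delta(\pi)^2$. Because $\Delta(\pi)>0$ and $24\ln T/Q_L(\pi)>0$, this is strictly smaller than the assumed lower bound on $M$, a contradiction. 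So every selection of $\pi$ occurs while $M$ is below the threshold, which yields the stated bound.

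The main obstacle is bookkeeping rather than any single inequality. We must ensure the high-probability statements hold uniformly over all episodes and all paths at once. Lemma~\ref{lem:path_obs} is stated with probability $1-T^{-2}$ per path, position and count, so a union bound is needed. Whether this is affordable depends on how the joint event is defined; the paper presumably charges the failure probability to the lower-order term. We also need care that the convergence radius is evaluated at time $t$ using the observation count from before episode $t$, which matches the definition of $N_e(t)$. A further subtlety is the strict versus non-strict threshold at equality, which should be handled by the strict inequality $\Delta(\pi)>0$ as above.
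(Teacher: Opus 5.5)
Your proof is correct and matches the paper's argument. The paper splits the selections of $\pi$ into a warm-up phase ($M_\pi(t)<24\ln T/Q_L(\pi)$) and a saturation phase, applies your chain (Lemma~\ref{lem:path_obs} via $Q_i\ge Q_L$, then Lemma~\ref{lem:ucb_convergence} with $\ln t\le\ln T$, then Lemma~\ref{lem:subopt_implies_overest}) at the last saturated selection, and so is your contradiction argument stated directly. Your $+1$ is in fact absorbed exactly: your derivation shows $M\ge A\Rightarrow M\le B$ with $A=24\ln T/Q_L(\pi)$ and $B=8\rho C(\pi)^2\ln T/\Delta(\pi)^2$, and $\min(A,B)\ge 1$ for $T\ge 2$ because $C(\pi)\ge 1/\sqrt{Q_L(\pi)}\ge 1$, so $M+1\le A+B$.
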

\begin{proof}
Fix $\pi = (e_1, \dots, e_L) \notin \Psi^*$ and let $\mathcal{T}_\pi = \{t \le T : \pi_t = \pi\}$, so $M_\pi(T) = |\mathcal{T}_\pi|$. Partition $\mathcal{T}_\pi$ into the warm-up phase $\mathcal{T}_\pi^{\mathrm{wu}} = \{t \in \mathcal{T}_\pi : M_\pi(t) < 24 \ln T / Q_L(\pi)\}$ and the saturation phase $\mathcal{T}_\pi^{\mathrm{sat}} = \mathcal{T}_\pi \setminus \mathcal{T}_\pi^{\mathrm{wu}}$. Since $M_\pi$ increments by $1$ on each $t \in \mathcal{T}_\pi$, the warm-up phase contributes $|\mathcal{T}_\pi^{\mathrm{wu}}| \le 24 \ln T / Q_L(\pi)$.

\textbf{Saturation phase.} For any $t \in \mathcal{T}_\pi^{\mathrm{sat}}$, $M_\pi(t) \ge 24 \ln T / Q_i(\pi)$ for every $i \le L$ (using $Q_i \ge Q_L$), so Lemma~\ref{lem:path_obs} gives $N_{e_i}(t) \ge Q_i(\pi)\,M_\pi(t)/2$. Chaining Lemmas~\ref{lem:subopt_implies_overest} and~\ref{lem:ucb_convergence} (with $\ln t \le \ln T$),
\[
    \Delta(\pi) \;\le\; \sum_{i=1}^{L} S_i(\pi)\,\delta_{e_i}(t)
    \;\le\; \sum_{i=1}^{L} S_i(\pi) \cdot 2\sqrt{\tfrac{2\rho \ln T}{Q_i(\pi)\,M_\pi(t)}}
    \;=\; \tfrac{2\sqrt{2\rho \ln T}}{\sqrt{M_\pi(t)}}\,C(\pi).
\]
Squaring yields $M_\pi(t) \le 8\rho\,C(\pi)^2 \ln T / \Delta(\pi)^2$ for every $t \in \mathcal{T}_\pi^{\mathrm{sat}}$. Evaluating at $t^\star = \max \mathcal{T}_\pi^{\mathrm{sat}}$ (or noting $|\mathcal{T}_\pi^{\mathrm{sat}}| = 0$ otherwise) gives $|\mathcal{T}_\pi^{\mathrm{sat}}| \le M_\pi(t^\star) \le 8\rho\,C(\pi)^2 \ln T / \Delta(\pi)^2$.

Combining: $M_\pi(T) \le 24 \ln T / Q_L(\pi) + 8\rho\,C(\pi)^2 \ln T / \Delta(\pi)^2$.
\end{proof}

\begin{theorem}[Path-Dependent Regret of \optinll{}]
\label{thm:path_dependent_regret}
Under Assumption~\ref{assum:strict_bounds} with 
$\rho \ge 2$, the expected cumulative regret of \optinll{} 
satisfies
\begin{equation}
    \mathbb{E}[R(T)] \;\le\;
    \sum_{\pi \notin \Psi^*}
    \frac{8\rho\,C(\pi)^2\,\ln T}{\Delta(\pi)} \;+\;
    \sum_{\pi \notin \Psi^*}
    \frac{24\,\ln T}{Q_L(\pi)} \;+\;
    O\!\left(|E| + \frac{|\Psi(v_s,v_g)|\,L_{\max}}{T}\right).
    \label{eq:path_dep_bound}
\end{equation}
\end{theorem}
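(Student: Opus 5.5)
We prove the bound by the standard regret decomposition over selected paths, conditioned on a good event. Define the good event $\mathcal{G}$ as the intersection over $t \le T$ of the optimism event $\mathcal{O}_t$, the concentration events of Lemma~\ref{lem:ucb_convergence}, and the path-observation events of Lemma~\ref{lem:path_obs} for every suboptimal path and position. Write
\[
R(T) = \mathbb{E}\Bigl[\sum_{\pi\notin\Psi^*} \Delta(\pi)\, M_\pi(T)\Bigr].
\]
We then split this expectation according to whether $\mathcal{G}$ holds.

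\textbf{On $\mathcal{G}$.} Lemma~\ref{lem:per_path_selections} applies directly to each suboptimal $\pi$. Multiplying by $\Delta(\pi)$ gives the contribution
\[
\Delta(\pi)\Bigl(\tfrac{24\ln T}{Q_L(\pi)} + \tfrac{8\rho C(\pi)^2\ln T}{\Delta(\pi)^2}\Bigr).
\]
Using $\Delta(\pi)\le p^*\le 1$ on the first term and cancelling one power of $\Delta(\pi)$ in the second reproduces the two leading sums of \eqref{eq:path_dep_bound}. Summing over $\pi\notin\Psi^*$ completes this case.

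\textbf{Off $\mathcal{G}$.} Per-episode regret is at most $1$, so we charge each failure event by its probability, split into two parts.

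\emph{Failures of optimism or concentration.} By Lemma~\ref{lem:ucb_optimism}, optimism fails at episode $t$ with probability at most $|E|t^{-3}$, so these episodes contribute at most $\sum_t |E| t^{-3} = O(|E|)$. Rather than taking a global union over all $t$ and multiplying by $T$, we charge only the episodes where optimism fails. The concentration failures of Lemma~\ref{lem:ucb_convergence} also have per-$t$ probability at most $t^{-3}$ per edge and are summable the same way, giving another $O(|E|)$.

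\emph{Failures of the path-observation Chernoff events.} Lemma~\ref{lem:path_obs} holds with probability $1-T^{-2}$ for each pair (path $\pi$, position $i$). A union bound over at most $|\Psi(v_s,v_g)|\,L_{\max}$ pairs gives failure probability $|\Psi|L_{\max}T^{-2}$. On this failure the regret is trivially at most $T$, contributing $|\Psi|L_{\max}/T$. This matches the stated lower-order term.

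The main obstacle is this decoupling. Lemma~\ref{lem:per_path_selections} is stated only on the joint event, so we must argue that regret outside it is bounded by the per-episode failure probabilities and does not become a $T\cdot\Pr(\mathcal{G}^c)$ term. To do this, we redo the per-path counting so that each episode $t\in\mathcal{T}_\pi^{\mathrm{sat}}$ is only counted as "good" when $\mathcal{O}_t$ and the concentration event at time $t$ hold. Every other episode is charged to the summable $t^{-3}$ failure terms, and the Chernoff events are handled globally as above.
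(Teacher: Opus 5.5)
Your argument follows the paper's proof in all its ingredients. You use the same good event built from Lemmas~\ref{lem:ucb_optimism}, \ref{lem:ucb_convergence} and \ref{lem:path_obs}, and you apply Lemma~\ref{lem:per_path_selections} multiplied by $\Delta(\pi)$, with $\Delta(\pi)\le 1$ absorbing the warm-up term. You get $O(|E|)$ from the summable $t^{-3}$ failures, and $|\Psi(v_s,v_g)|\,L_{\max}/T$ from charging $T$ against the Chernoff failures.

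Where you depart from the paper is your last paragraph, and that departure fixes something rather than being a detour. The paper splits on a single event $\mathcal{G}$ intersected over all $t\le T$. It then bounds the optimism/concentration part of $\mathbb{E}[R(T)\mathbf{1}_{\mathcal{G}^c}]$ by $\sum_t t^{-3}$. But the radius uses $\ln t$, so the union bound only gives $\Pr(\mathcal{G}^c)\le \sum_{t\le T}2|E|t^{-3}+|\Psi(v_s,v_g)|\,L_{\max}T^{-2}$. This is a constant of order $|E|$ that does not vanish with $T$, while $R(T)$ can be of order $T$ on $\mathcal{G}^c$. As written, that step bounds the expected number of bad episodes, not $\mathbb{E}[R(T)\mathbf{1}_{\mathcal{G}^c}]$.

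Your hybrid repairs this: a global event only for the Chernoff bounds of Lemma~\ref{lem:path_obs}, and per-episode events ($\mathcal{O}_t$ and time-$t$ concentration over all edges) for everything else. The Chernoff failure probability is $O(|\Psi(v_s,v_g)|\,L_{\max}T^{-2})$, so charging $T$ there is harmless. This is what actually justifies the $O(|E|)$ term.

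To close the sketch, state explicitly where you apply the saturation-phase inequality. Take $t^\star$ to be the last episode in $\mathcal{T}^{\mathrm{sat}}_\pi$ at which $\mathcal{O}_{t^\star}$ and the concentration events hold. The proof of Lemma~\ref{lem:per_path_selections} uses these events only at that single time, together with the global Chernoff event. Hence the number of good saturation selections of $\pi$ is at most $M_\pi(t^\star)\le 8\rho\,C(\pi)^2\ln T/\Delta(\pi)^2$, regardless of what happened in earlier bad episodes. Since each episode selects a single path, the bad episodes across all paths contribute at most $\sum_t 2|E|t^{-3}=O(|E|)$ regret in expectation.
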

\begin{proof}[Proof Sketch]
On the joint high-probability event of 
Lemmas~\ref{lem:ucb_optimism},~\ref{lem:ucb_convergence}, 
and~\ref{lem:path_obs}, Lemma~\ref{lem:per_path_selections} 
bounds $M_\pi(T)$ for every suboptimal $\pi$. The 
contribution of each $\pi$ to expected regret is 
$\Delta(\pi)\,M_\pi(T)$; summing and absorbing the 
warm-up term via $\Delta(\pi) \le 1$ gives the leading 
two sums. Off-event contributions are $O(|E|)$ from 
optimism/UCB failures plus 
$O(|\Psi(v_s, v_g)| L_{\max}/T)$ from observation 
failures. Full proof in Appendix~\ref{app:thm_proof}.
\end{proof}

\subsection{Discussion}
\label{sec:theory_discussion}

\textbf{Comparison with the CCB bound.}
The edge-level CombCascade 
bound~\eqref{eq:combcascade_bound} sums 
$O(1/\Delta_{e,\min})$ over suboptimal edges and scales 
globally with $1/p^*$, avoiding any enumeration of paths. 
In contrast, the path-level bound sums 
$O(C(\pi)^2 \ln T / \Delta(\pi))$ over suboptimal paths, 
with $C(\pi)$ capturing position-dependent structure 
discarded by the edge-level analysis 
(Remark~\ref{rem:complexity_interpretation}). Neither 
dominates: the edge-level bound is preferable when 
$|\Psi(v_s, v_g)|$ is large relative to $|E|$ (typically 
dense graphs), while the path-level bound is more 
informative when the path count is polynomial in $|V|$ 
(layered DAGs, sparse networks). Both bounds depend on 
unknown problem parameters and serve as theoretical 
guarantees rather than computable predictions; both are 
simultaneously valid, so the minimum applies. The 
path-level bound also carries the 
$|\Psi(v_s,v_g)|\,L_{\max}/T$ failure-event term, which 
vanishes asymptotically but can dominate at moderate $T$ 
in dense regimes.

\textbf{Absence of guarantees for \bayesnll{}.}
The analysis above applies only to \optinll{} 
(Remark~\ref{rem:ts_regret}). 
Section~\ref{sec:experiments} exhibits an adversarial 
instance on which \bayesnll{} fails to converge over 
$T = 10{,}000$ episodes---behavior consistent with 
the $\Omega(2^{k^*})$ obstruction 
of~\cite{wang2018thompson}.
\bayesnll{} typically 
outperforms \optinll{} in our experiments, but its 
performance can degrade sharply when the optimal path is 
long and suboptimal competitors are edge-disjoint---the 
regime where \optinll{}'s position-aware analysis remains 
tight.

Further discussions regarding Variance-awareness and warm-up cost appear in 
Appendix~\ref{app:theory_discussion}.

\section{Experimental Evaluation}
\label{sec:experiments}

We empirically confirm Theorem~\ref{thm:path_dependent_regret} 
and evaluate practical performance through three 
experiments: regret-curve validation 
(Section~\ref{sec:exp1}), comprehensive baseline 
comparison (Section~\ref{sec:exp2}), and scaling analysis 
(Section~\ref{sec:exp3}). All results are averaged over 10 random topologies per 
setting $\times$ 20 seeds per topology (200 runs). We consider the following four domains:

\textbf{1. Erd\H{o}s--R\'enyi random 
graphs}~\cite{erdos_renyi} serve as the primary validation 
domain: directed graphs with edge probability $0.4$, 
reliabilities $p_e \sim \text{Uniform}[0.01, 0.99]$, and 
$(v_s, v_g)$ sampled uniformly with $v_s \ne v_g$.
\textbf{2. Layered DAGs} provide a controlled 
path-diversity benchmark with $D$ layers and width $w$ 
($v_s$ alone in layer 1, $v_g$ alone in layer $D$, full 
connectivity between adjacent layers), giving 
$|V| = w(D-2) + 2$ and $|E| = 2w + w^2(D-3)$. 
\textbf{3. Grid worlds} are $n \times n$ grids with 
4-connectivity, $|V| = n^2$, $|E| = 4n(n-1)$, $v_s$ at the 
top-left and $v_g$ at the bottom-right; reliabilities 
uniform on $[0.01, 0.99]$.
\textbf{4. Quantum repeater networks} use a 25-node, 43-edge 
abstraction of the SURFnet 
backbone~\citep{knight2011internet}, with edge 
reliabilities following the standard fiber-attenuation model~\citep{Wehner2018}
and endpoints sampled at topological distance $\ge 3$. 
Full details in Appendix~\ref{app:surfnet}.


\textbf{Convergence:} An algorithm has converged at the 
first episode $t$ at which its greedy policy (the path 
extracted from current estimates without exploration 
bonuses; details in Appendix~\ref{app:greedy_extraction}) 
equals some $\pi^* \in \Psi^*$ for 10 consecutive 
episodes. The \emph{convergence rate} is the fraction of 
(topology, seed) runs that converged before horizon $T$.
\textbf{Tie-breaking:} Exact floating-point ties on path 
costs are broken by adding independent 
$\mathrm{Uniform}(0, 10^{-8})$ noise per edge---negligible 
relative to typical log edge costs.
\textbf{Hyperparameters:} \optinll{} uses $\rho = 2.0$ in 
Experiment~1 (matching 
Theorem~\ref{thm:path_dependent_regret}) and $\rho = 1.0$ 
in Experiments~2 and~3 (selected by a preliminary sweep 
on Erd\H{o}s--R\'enyi optimizing median convergence 
episode). \bayesnll{} uses the default 
$\mathrm{Beta}(1, 1)$ prior.
\textbf{Baselines.}
We compare against six baselines spanning CCB, SSP 
planning, RL, and ablations as listed below. Full implementation details are provided in Appendix~\ref{app:baselines}.

\textbf{1. CombCascade}~\cite{kveton2015combinatorial} is the 
canonical CCB algorithm; we use Dijkstra on 
$-\log \hat p_e$ to implement its maximization oracle 
(implementation differences from \optinll{} in 
Appendix~\ref{app:combcascade_diffs}).
\textbf{2. RTDP}~\cite{Barto1993rtdp} is the standard SSP 
baseline. \textbf{3. LRTDP}~\citep{Bonet2003lrtdp} extends RTDP 
with state-level convergence labeling.
\textbf{4. Q-Learning} is a standard tabular RL baseline 
($\alpha = 0.1$, $\gamma = 0.99$, $\varepsilon$-greedy 
with decaying $\varepsilon$, optimistic hop-count 
initialization).
\textbf{5. CUCB} is a log-transform ablation: identical to 
\optinll{} but runs Dijkstra on 
$w(e) = 1 - \hat p_e(t)$ instead of $-\log \hat p_e(t)$, 
optimizing the wrong objective.
\textbf{6. Random} selects paths via random walks with cycle 
avoidance and dead-end restart, providing a worst-case 
floor.

\subsection{Experiment 1: Regret Curve Validation}
\label{sec:exp1}

We test the $O(\ln T)$ scaling predicted by 
Theorem~\ref{thm:path_dependent_regret} and compare the 
two bounds against empirical regret over $T = 200{,}000$ 
episodes.

\textbf{$O(\ln T)$ Scaling.}
We fit $\mathbb{E}[R(T)] \approx a \ln T + b$ over 40 
log-spaced episodes for each (algorithm, domain) pair 
(Figure~\ref{fig:regret_curves}; full per-domain results 
in Appendix~\ref{app:exp1_table}). \optinll{}, 
CombCascade, and \bayesnll{} all achieve 
$R^2 \ge 0.96$ across the four domains, 
consistent with $O(\ln T)$ regret 
(Theorem~\ref{thm:path_dependent_regret} and 
Eq.~\ref{eq:combcascade_bound}); slope ranges are 
$6$--$17$ for \bayesnll{} (an order of magnitude smaller 
than the UCB-based methods), $93$--$168$ for CombCascade, 
and $118$--$227$ for \optinll{}. By contrast, RTDP, LRTDP, and Random have $R^2 \approx 0.75$, 
suggesting their regret grows approximately linearly 
in $T$ rather than logarithmically. 
We caution that high log-linear $R^2$ is suggestive 
rather than conclusive evidence of $O(\ln T)$, since 
slowly-growing power-law curves can also fit. Large 
standard deviations on $R(T)$ are observed, reflecting variation across 
random topologies and stochastic edge outcomes.

\begin{figure}[t]
\caption{Empirical regret of all algorithms across four domains over $T = 200{,}000$ episodes.}
\label{fig:regret_curves}
\centering
\begin{subfigure}[b]{0.25\textwidth}
    \centering
    \includegraphics[width=\textwidth, trim=0 0 0 0.5cm, clip]{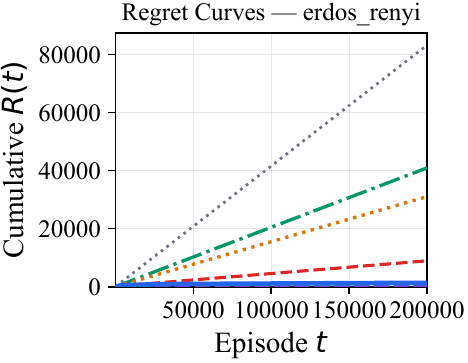}
    \caption{Erd\H{o}s--R\'{e}nyi}
    \label{fig:regret_erdos_renyi}
\end{subfigure}
\hfill
\begin{subfigure}[b]{0.24\textwidth}
    \centering
    \includegraphics[width=\textwidth, trim=5mm 0 0 0.5cm, clip]{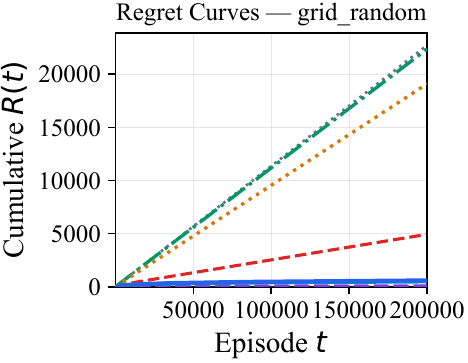}
    \caption{Grid World}
    \label{fig:regret_grid_random}
\end{subfigure}
\hfill
\begin{subfigure}[b]{0.24\textwidth}
    \centering
    \includegraphics[width=\textwidth, trim=5mm 0 0 0.5cm, clip]{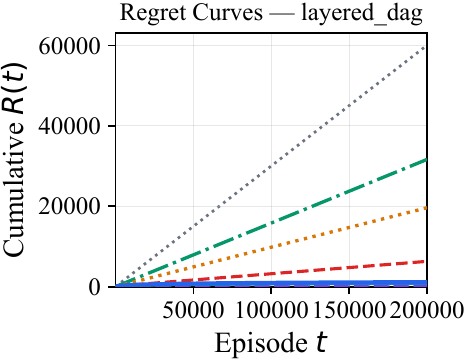}
    \caption{Layered DAG}
    \label{fig:regret_layered_dag}
\end{subfigure}
\hfill
\begin{subfigure}[b]{0.24\textwidth}
    \centering
    \includegraphics[width=\textwidth, trim=5mm 0 0 0.5cm, clip]{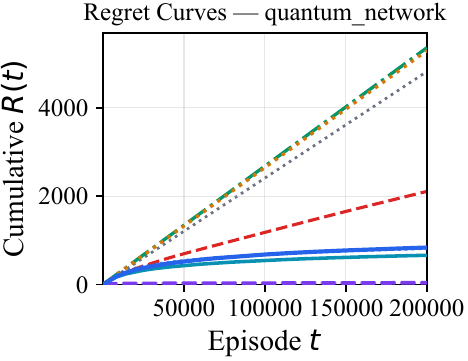}
    \caption{Quantum Network}
    \label{fig:regret_quantum_network}
\end{subfigure}

\makebox[\textwidth][c]{%
    \includegraphics[width=1.9\textwidth]{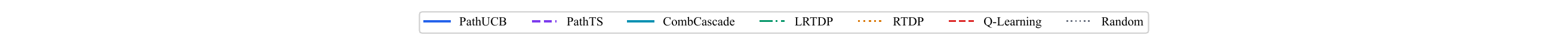}}
    \vspace{-2em}
\end{figure}

\textbf{Bound Comparison.}
We empirically compare the edge-level CombCascade bound 
(Eq.~\ref{eq:combcascade_bound}) and the path-level 
\optinll{} bound (Theorem~\ref{thm:path_dependent_regret}) 
against \optinll{}'s empirical regret 
(Figure~\ref{fig:bounds_comparison} in 
Appendix~\ref{app:bound_comparison}). Two observations 
align with Section~\ref{sec:theory_discussion}: (i) 
neither bound is uniformly tighter---the path-level bound 
dominates on the sparse Layered DAG and Grid domains 
while the edge-level bound dominates on the dense 
Erd\H{o}s--R\'enyi and Quantum networks, matching the 
$|\Psi(v_s, v_g)|$-vs-$|E|$ tradeoff; and (ii) both 
bounds are loose which is consistent with the 
worst-case nature of the gap-dependent constants. Nonetheless,   the logarithmic regret accumulation guarantee (at the limit) is still valid and empirically observed over the $2 \times 10^5$ episodes.

\subsection{Experiment 2: Comprehensive Baseline Comparison}
\label{sec:exp2}

We evaluate all agents on the four primary domains over 
$T = 10{,}000$ episodes 
(Figure~\ref{fig:sr_baselines}---bottom-right is best), 
then introduce a fifth adversarial domain, \emph{Path 
Trap}, to stress-test \bayesnll{} on the structure 
underlying Remark~\ref{rem:ts_regret}.

\begin{figure}[t]
\centering
\caption{Empirical regret $R(T) \pm 95\%$ CI at $T = 10{,}000$ 
episodes, (y-axis) versus convergence rate (x-axis) for each 
algorithm across the four primary domains with domain size ($|V|$) in parenthesis. \textbf{Bottom-right is better}.}
\begin{subfigure}[b]{0.26\textwidth}
    \centering
    \includegraphics[width=\textwidth, trim=0 0 0 4mm, clip]{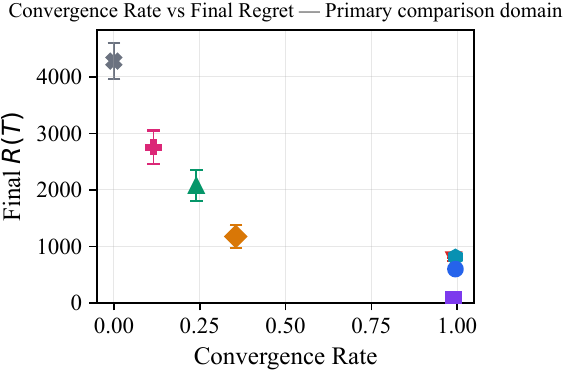}
    \caption{Erd\H{o}s-R\'{e}nyi (13)}
    \label{fig:sr_erdos_renyi}
\end{subfigure}
\hfill
\begin{subfigure}[b]{0.23\textwidth}
    \centering
    \includegraphics[width=\textwidth, trim=6mm 0 0 4mm, clip]{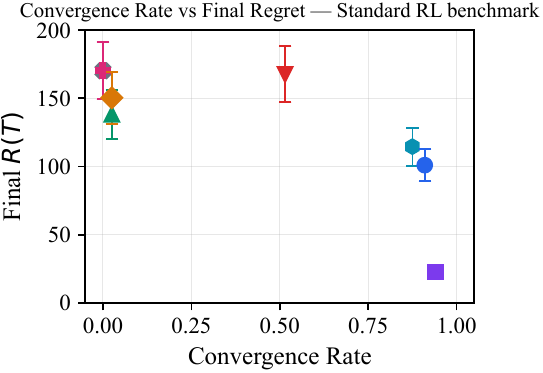}
    \caption{Grid World (36)}
    \label{fig:sr_grid_random}
\end{subfigure}
\hfill
\begin{subfigure}[b]{0.21\textwidth}
    \centering
    \includegraphics[width=\textwidth, trim=6mm 0 0 4mm, clip]{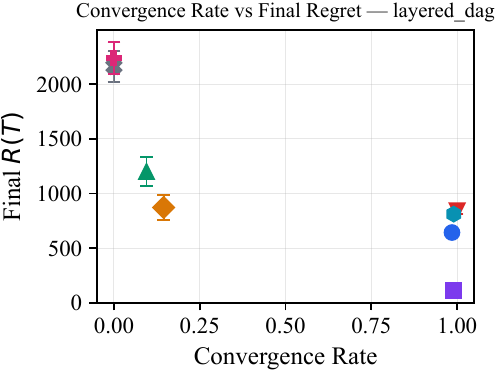}
    \caption{Layered DAG (17)}
    \label{fig:sr_layered_dag}
\end{subfigure}
\hfill
\begin{subfigure}[b]{0.24\textwidth}
    \centering
    \includegraphics[width=\textwidth, trim=6mm 0 0 4mm, clip]{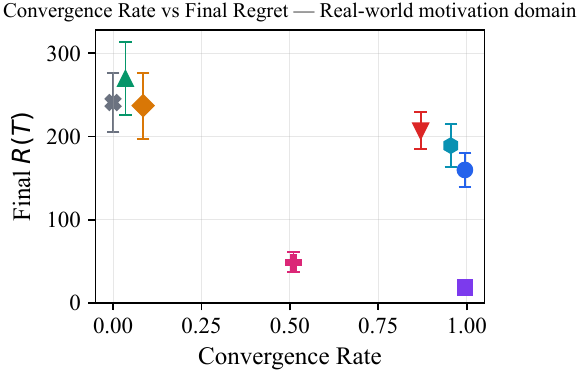}
    \caption{Quantum Network (25)}
    \label{fig:sr_quantum}
\end{subfigure}
\makebox[\textwidth][c]{%
    \includegraphics[width=1.8\textwidth]{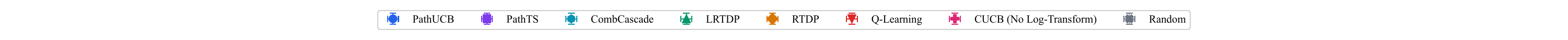}
}
\vspace{-2em}
\label{fig:sr_baselines}
\end{figure}

\textbf{Primary Results.}
\bayesnll{} achieves the lowest final regret on all four 
domains by a substantial margin---$4\times$ to 
$10\times$ lower than \optinll{} and CombCascade. The 
three bandit-style algorithms converge to an optimal 
policy in $\ge 85\%$ of runs, while RTDP and LRTDP 
converge in only $0$--$46\%$, confirming that SSP solvers 
are ill-suited to \srp{}: the closed-loop machinery they 
invoke is unnecessary under open-loop equivalence 
(Lemma~\ref{lemma:open_loop}) and does not exploit the 
combinatorial-bandit structure. LRTDP underperforms RTDP 
on every domain, plausibly because its convergence 
labeling locks in optimistic bias that RTDP eventually 
corrects via continued 
backups~\cite{mcmahan2005bounded}---an effect amplified 
by \srp{}'s severe global-reset penalty.
\textbf{Log-Transform Ablation.}
CUCB underperforms \optinll{} on every domain, incurring 
$1.2\times$--$2.1\times$ higher final regret and  lower convergence rates (e.g., $47\%$ vs. 
$89\%$ on Grid World). The gap confirms the necessity of 
the log-transform (Lemma~\ref{lem:structural}): CUCB 
biases path selection toward edge-rich, 
low-individual-failure routes rather than 
multiplicatively reliable ones.
\begin{wrapfigure}{r}{0.39\textwidth}
\caption{Path Trap topology.}
\centering
\includegraphics[width=0.38\textwidth, trim=0 0 0 5mm, clip]{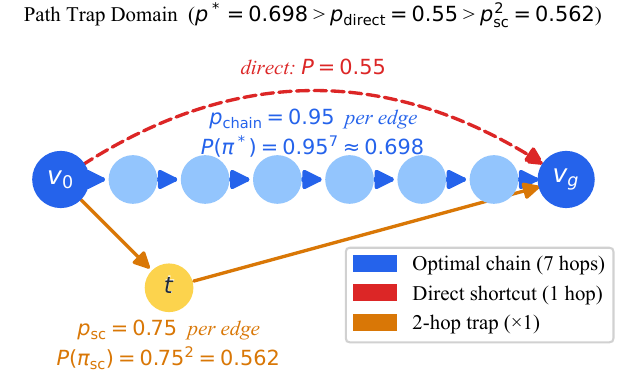}
\label{fig:path_trap_graph}
\vspace{-1em}
\end{wrapfigure}

\textbf{Exploration Failure of Thompson Sampling.} \emph{Path Trap} (Figure~\ref{fig:path_trap_graph}) is a 
deterministic-parameter graph with a 7-hop optimal chain 
($p_{\text{chain}} = 0.95$, $P(\pi^*) \approx 0.698$), 
a 2-hop shortcut ($p_{\text{shortcut}} = 0.75$, 
$P \approx 0.563$ each), and one direct edge 
($p_{\text{direct}} = 0.55$). Edge probabilities are 
fixed; outcomes remain stochastic. The construction 
penalizes algorithms that over-commit to short paths with 
frequent positive feedback rather than explore the longer 
but more reliable chain---the failure mode of 
combinatorial Thompson Sampling on multiplicative-reward 
instances~\cite{wang2018thompson}.
\begin{wrapfigure}{r}{0.36\textwidth}
\centering
\vspace{-1em}
\caption{Regret vs.\ convergence on the Path 
Trap domain. Markers as in Figure~\ref{fig:sr_baselines}.}
\includegraphics[width=0.34\textwidth, trim=0 0 0 5mm, clip]{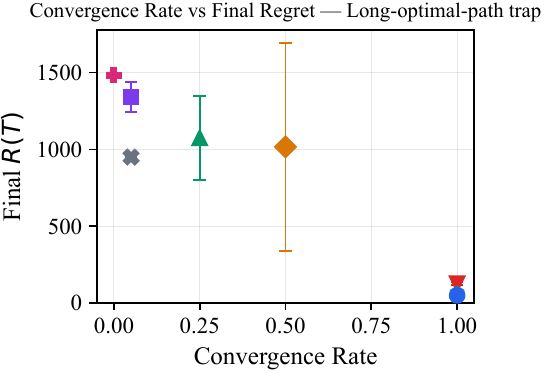}
\label{fig:sr_path_trap}
\vspace{-1em}
\end{wrapfigure}
As illustrated in Figure~\ref{fig:sr_path_trap},
\bayesnll{}'s regret reaches $R(T) = 1{,}421$---worse than Random ($986$, which selects the optimal chain with probability $1/3$)---while \optinll{} 
achieves the best result, $R(T) = 66$ with $100\%$ 
convergence. The asymmetry reflects the difference between 
UCB optimism and Thompson Sampling on this instance: the 
shortcuts generate frequent positive feedback that rapidly 
concentrates \bayesnll{}'s Beta posteriors around the 
suboptimal alternatives, while \optinll{} maintains 
inflated estimates for the under-explored chain edges via 
its UCB confidence radius and eventually routes along the 
optimal path.
This is consistent with the $\Omega(2^{k^*})$ obstruction 
of~\cite{wang2018thompson} (Theorem~3) for combinatorial 
Thompson Sampling on multiplicative-reward instances. With 
$k^* = 7$, Path Trap sits in the regime where this 
obstruction predicts difficulty within practical horizons. 
The practical takeaway is not that \bayesnll{} should be 
avoided---it remains the strongest performer on every 
other domain we tested---but that adversarial instances 
exist on which posterior-sampling can fail dramatically, 
and \optinll{}'s theoretical guarantee provides the safer 
fallback.

\subsection{Experiment 3: Scaling Analysis}
\label{sec:exp3}
A scaling study (full results in 
Appendix~\ref{app:exp3}) confirms two trends. First, on 
both Erd\H{o}s--R\'enyi (where $|\Psi(v_s, v_g)|$ grows 
super-polynomially in $|V|$) and Layered DAG (where it 
grows polynomially in width $w$), \optinll{} and 
\bayesnll{} exhibit approximately linear regret growth 
in $|E|$, indicating edge-count scalability; in dense 
graphs $|E| = O(|V|^2)$, so this translates to 
polynomial-in-$|V|$ scaling. Second, on Erd\H{o}s--R\'enyi 
with $|V| = 15$, regret grows super-linearly as the 
support span $p_{\max} - p_{\min}$ shrinks, consistent 
with the gap-dependent factor $1/\Delta(\pi)$ in 
Theorem~\ref{thm:path_dependent_regret}: as edge 
reliabilities concentrate, path reliabilities cluster and 
the per-path gaps $\Delta(\pi)$ shrink approximately with 
the span.

\section{Conclusion}
\label{sec:conclusion}

We introduced \emph{Stochastic Reset Pathfinding} (\srp{}) 
and showed that its global-reset structure makes the 
optimal policy open-loop, placing the problem within the 
combinatorial cascading bandit framework. Building on this 
reduction, we developed a path-level regret bound for 
\optinll{} via a per-path complexity $C(\pi)$ that captures 
how each edge's prefix and suffix reliability jointly 
determine its contribution. Our experiments validate the 
$O(\ln T)$ rate, demonstrate the strong empirical 
performance of the Thompson Sampling instantiation 
\bayesnll{}, and exhibit an adversarial instance on which 
\bayesnll{} fails---consistent with a known exponential 
obstruction. Natural future directions include a 
variance-aware (Bernstein) extension of the path-level 
analysis, a regret bound for \bayesnll{} bridging existing 
matroid- and semi-bandit-style results, and adaptation of 
CombCascade's prefix argument to close the warm-up gap in 
$1/Q_L(\pi)$.

\bibliographystyle{plain}
\bibliography{ref.bib}

\appendix
\newpage

\section{Theoretical Status of \bayesnll{}: Extended Discussion}
\label{app:ts_discussion}

The two prior analyses closest to \bayesnll{} are those of 
Cheung et al.~\cite{zhong2021thompson} and Wang \& 
Chen~\cite{wang2018thompson}. Neither directly extends to 
\srp{}.
Cheung et al.\ prove an $O(L \log T / \Delta)$ regret bound 
for Beta-Bernoulli Thompson Sampling on cascading bandits, 
but their analysis relies on the uniform matroid structure 
of the top-K feasible set---specifically, exchangeability 
of items in the optimal solution---which fails for general 
graph paths.

Wang \& Chen analyze combinatorial Thompson Sampling for 
general CMAB with multiplicative rewards, but under 
semi-bandit rather than cascade feedback. Adapting their 
concentration arguments to cascade observations introduces 
position-dependent factors that, in our analysis, take the 
form of the prefix reliabilities $Q_i(\pi)$ governing 
per-edge observation probabilities 
(Definition~\ref{def:path_quantities} and the warm-up term 
in Theorem~\ref{thm:path_dependent_regret}). Their 
Theorem~3 also exhibits an instance with conjunctive 
product reward on which any Beta-Bernoulli TS variant 
provably incurs $\Omega(2^{k^*})$ regret, where 
$k^*$ is the size of the optimal solution.

Together, these results suggest that a regret analysis for 
\bayesnll{} is within reach but would require combining 
\cite{wang2018thompson}'s general-feasible-set techniques 
with \cite{zhong2021thompson}'s cascade-feedback handling, 
and may inherit a path-length-exponential leading constant. 
\bayesnll{} is presented as a practical alternative 
motivated by the strong empirical performance of Thompson 
Sampling in combinatorial 
settings~\cite{chapelle2011empirical, wang2018thompson, 
zhong2021thompson}; see Section~\ref{sec:exp2}.

\section{Implementation Differences Between \optinll{} and CombCascade}
\label{app:combcascade_diffs}

\optinll{} and CombCascade share the same algorithmic 
template (per-edge UCB, log-transformed shortest path 
selection) but differ in three implementation choices:

\textbf{Initialization.} \optinll{} sets $\hat p_e(t) = 1$ 
for unvisited edges, ensuring strict optimism on the first 
visit. CombCascade assumes an initial sweep that draws a 
Bernoulli outcome for every edge before the main loop, 
guaranteeing $N_e(t) \ge 1$ throughout.

\textbf{Confidence radius.} \optinll{} uses 
$\sqrt{\rho \ln t / N_e(t)}$ with $\rho \ge 2$. CombCascade 
uses $\sqrt{1.5 \ln(t-1)/T_{t-1}(e)}$. Both yield $t^{-3}$ 
tail probabilities under Hoeffding's inequality.

\textbf{Clamping.} \optinll{} clips two-sided to 
$[p_{\min}, 1]$. CombCascade clips only above at $1$ (no 
lower clamp).

The strict optimism and the upper clamp at $1$ are used 
in Lemma~\ref{lem:subopt_implies_overest} to bound the 
prefix product and isolate the suffix-weighted 
overestimation sum that drives the path-level analysis. The 
lower clamp at $p_{\min}$ keeps log-weights finite 
throughout learning. Apart from CombCascade's initial 
sweep, the two algorithms perform comparably in our 
experiments (Section~\ref{sec:experiments}).

\section{Concentration Lemmas for the Clamped UCB Estimator}
\label{app:concentration}

Both Lemmas~\ref{lem:ucb_optimism} 
and~\ref{lem:ucb_convergence} reduce to a single 
two-sided concentration claim for the empirical mean 
$\bar p_e(t)$.

\begin{lemma}[Two-Sided Concentration]
\label{lem:two_sided_concentration}
Under $\rho \ge 2$, for each $e \in E$ and $t \ge 1$,
\begin{equation}
    \Pr\!\left(\, |\bar{p}_e(t) - p_e| \le 
    \sqrt{\tfrac{\rho \ln t}{N_e(t)}} \;\Big|\; 
    N_e(t) \ge 1 \right) \;\ge\; 1 - 2t^{-3}.
\end{equation}
\end{lemma}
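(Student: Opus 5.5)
The plan is to use the standard ``reward stack'' coupling, and to turn the conditioning on $\{N_e(t)\ge 1\}$ into an independence statement rather than a division by an uncontrolled probability.

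\textbf{Setup.} Fix $e\in E$. Without changing the law of the process, realize the outcomes of $e$ as an i.i.d.\ sequence $Y_1,Y_2,\dots\sim\mathrm{Bernoulli}(p_e)$, drawn in advance and independent of all other randomness (the other edges' stacks and any tie-breaking noise). The $k$-th time $e$ is attempted, its outcome is $Y_k$. Then on $\{N_e(t)=n\}$ we have $\bar p_e(t)=\bar Y_n:=\frac1n\sum_{k\le n}Y_k$. Define for each $n\ge1$ the stack-only event
\[
B_n=\Bigl\{|\bar Y_n-p_e|>\sqrt{\rho\ln t/n}\Bigr\}.
\]
For $t=1$ we always have $N_e(1)=0$. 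So the conditioning event is null and the claim is vacuous, and we take $t\ge2$. Since $N_e(t)\le t-1$, the bad event $B:=\{|\bar p_e(t)-p_e|>\sqrt{\rho\ln t/N_e(t)}\}$ satisfies
\[
B\cap\{N_e(t)\ge1\}\subseteq \Bigl(\bigcup_{n=1}^{t-1}B_n\Bigr)\cap\{N_e(t)\ge1\}.
\]

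\textbf{Key independence step.} Let $\tau$ be the first episode in which $e$ is attempted, so that $\{N_e(t)\ge1\}=\{\tau<t\}$. The history up to and including the decision at episode $\tau$ involves only other edges' outcomes and their estimators. Edge $e$ enters only through its fixed initialization $\hat p_e=1$, because no entry of the $e$-stack has yet been revealed. Hence $\tau$ is measurable with respect to randomness independent of $(Y_k)_k$, and therefore $\{\tau<t\}$ is independent of $\bigcup_{n<t}B_n$. This gives
\[
\Pr\bigl(B\cap\{N_e(t)\ge1\}\bigr)\le \Pr\Bigl(\bigcup_{n<t}B_n\Bigr)\Pr(N_e(t)\ge1).
\]
Dividing by $\Pr(N_e(t)\ge1)>0$ yields $\Pr(B\mid N_e(t)\ge1)\le\Pr(\bigcup_{n<t}B_n)$. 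I expect this to be the main obstacle. The subtlety is that $N_e(t)$ itself depends on the $Y_k$'s through the UCB index, so we cannot condition on $N_e(t)=n$. Only the event $\{N_e(t)\ge1\}$ is independent of the stack, and that is exactly what the statement conditions on. I would verify carefully that the lower clamp and the tie-breaking noise do not break this independence, which they do not, since neither reads $Y$ before time $\tau$.

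\textbf{Finish.} By Hoeffding, for each fixed $n$,
\[
\Pr(B_n)\le 2\exp\bigl(-2n\cdot\rho\ln t/n\bigr)=2t^{-2\rho}\le 2t^{-4}
\]
for $\rho\ge2$. A union bound over $n=1,\dots,t-1$ then gives $\Pr(\bigcup_{n<t}B_n)\le 2(t-1)t^{-4}\le 2t^{-3}$. Taking complements gives the stated bound $\Pr(|\bar p_e(t)-p_e|\le\sqrt{\rho\ln t/N_e(t)}\mid N_e(t)\ge1)\ge1-2t^{-3}$.
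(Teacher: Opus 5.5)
Your proof is correct and follows the same route as the paper: couple edge $e$'s outcomes to a pre-drawn i.i.d.\ stack, union-bound over the possible values of $N_e(t)$, and apply Hoeffding to get a $2t^{1-2\rho}$-type bound. Summing over $n\le t-1$ rather than the paper's $n\le t$ is immaterial.

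Your independence step, showing that $\{N_e(t)\ge 1\}=\{\tau<t\}$ is determined by randomness independent of $e$'s stack, adds rigor the paper lacks. The paper's argument as written only bounds $\Pr\bigl(\{|\bar p_e(t)-p_e|>\sqrt{\rho\ln t/N_e(t)}\}\cap\{N_e(t)\ge 1\}\bigr)$ and passes to the conditional probability without comment; your argument justifies that step. One small remaining nit: if $\Pr(N_e(t)\ge 1)=0$ for some $t\ge 2$, the statement is vacuous, just as in your $t=1$ case, so you cannot divide by it there.
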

\begin{proof}
By the fictitious sampling 
argument~\citep{lattimore2020bandit}, couple the observed 
outcomes of edge $e$ to a fixed i.i.d.\ sequence 
$Y_1, Y_2, \dots \sim \mathrm{Bernoulli}(p_e)$, so 
$\bar{p}_e(t) = \tfrac{1}{N_e(t)}\sum_{j=1}^{N_e(t)} Y_j$. 
Since $N_e(t) \in \{1, \dots, t\}$ is data-dependent, a 
union bound combined with Hoeffding's inequality gives
\[
\Pr\!\left(\exists\, n \in \{1,\dots,t\}\,:\, 
\bigl|\tfrac{1}{n}\!\sum_{j=1}^{n}\!Y_j - p_e\bigr| 
> \sqrt{\tfrac{\rho \ln t}{n}}\right)
\le 2\!\sum_{n=1}^{t} e^{-2\rho \ln t} 
= 2t^{1-2\rho},
\]
which is at most $2t^{-3}$ for $\rho \ge 2$.
\end{proof}

\begin{lemma}[UCB Optimism]
\label{lem:ucb_optimism}
Under Assumption~\ref{assum:strict_bounds} with $\rho \ge 2$, 
$\Pr(\mathcal{O}_t) \ge 1 -|E|\,t^{-3}$ for all $t \ge 1$. 
Consequently, the cumulative expected regret on episodes 
where $\mathcal{O}_t^c$ holds is $O(|E|)$, independent of 
$T$.
\end{lemma}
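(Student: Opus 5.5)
The plan is a one-sided version of Lemma~\ref{lem:two_sided_concentration}, applied edge by edge, followed by a union bound over $E$ and a summation over $t$. Only the lower deviation of $\bar p_e(t)$ matters for optimism, so I expect to get a factor $1$ rather than $2$ in the tail. That factor is what makes the stated constant $|E|\,t^{-3}$ come out.

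\textbf{Step 1 (per-edge optimism).} Fix $e \in E$ and $t \ge 1$. I split into three cases.
\begin{itemize}
\item If $N_e(t) = 0$, then $\hat p_e(t) = 1 > p_e$ by Assumption~\ref{assum:strict_bounds}, so optimism holds deterministically.
\item If $N_e(t) \ge 1$, set $x = \bar p_e(t) + \sqrt{\rho \ln t / N_e(t)}$. The map $y \mapsto \mathrm{clip}(y, p_{\min}, 1)$ is monotone and fixes $p_e \in [p_{\min}, 1)$. Hence $x \ge p_e$ implies $\hat p_e(t) = \mathrm{clip}(x, p_{\min}, 1) \ge \mathrm{clip}(p_e, p_{\min}, 1) = p_e$. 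So both clamps preserve optimism, and the event $\{\hat p_e(t) < p_e\}$ is contained in $\{\bar p_e(t) < p_e - \sqrt{\rho \ln t / N_e(t)}\}$.
\item The degenerate case $t = 1$ needs no argument: the claimed bound $1 - |E|$ is vacuous (and anyway $N_e(1) = 0$ for all $e$).
\end{itemize}

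\textbf{Step 2 (one-sided tail).} I use the same fictitious i.i.d.\ sequence $Y_1, Y_2, \dots \sim \mathrm{Bernoulli}(p_e)$ as in the proof of Lemma~\ref{lem:two_sided_concentration}. The data-dependent count $N_e(t)$ lies in $\{1, \dots, t\}$, so I union bound over all possible values $n$. This avoids conditioning on $N_e(t) \ge 1$. One-sided Hoeffding then gives
\[
\Pr\!\left(\exists\, n \le t:\ \tfrac1n\textstyle\sum_{j\le n} Y_j < p_e - \sqrt{\rho\ln t/n}\right) \le \sum_{n=1}^{t} e^{-2\rho\ln t} = t^{1-2\rho} \le t^{-3}
\]
for $\rho \ge 2$. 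A union bound over the $|E|$ edges yields $\Pr(\mathcal{O}_t^c) \le |E|\,t^{-3}$.

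\textbf{Step 3 (regret on the failure event).} The per-episode regret satisfies $r(\pi_t) = p^* - P(\pi_t) \le 1$. The regret accumulated on $\mathcal{O}_t^c$ is therefore at most
\[
\sum_{t=1}^{T} \Pr(\mathcal{O}_t^c) \le |E| \sum_{t \ge 1} t^{-3} = \zeta(3)\,|E| < 1.21\,|E|,
\]
which is $O(|E|)$ and independent of $T$.

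The only delicate point, and the one I expect to be the main obstacle, is Step~1's treatment of the clamp together with the random sample count. The clamp must not break optimism. This needs $p_e < 1$ for the upper clamp and $p_e \ge p_{\min}$ for the lower one, both supplied by Assumption~\ref{assum:strict_bounds}. The random $N_e(t)$ is handled by the union over $n$ rather than by conditioning. Once these are in place, the remaining steps are routine.
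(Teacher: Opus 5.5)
Your proposal is correct and follows essentially the same route as the paper: the deterministic case $N_e(t)=0$, the reduction of $\{\hat p_e(t) < p_e\}$ to a lower-tail deviation of $\bar p_e(t)$, the one-sided Hoeffding bound with a union over $n \le t$ as in Lemma~\ref{lem:two_sided_concentration}, a union bound over the $|E|$ edges, and summability of $t^{-3}$. Your explicit clamp-monotonicity check and your avoidance of conditioning on $N_e(t)\ge 1$ are slightly more careful than the paper's write-up, but the argument is the same.
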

\begin{proof}[Proof of Lemma~\ref{lem:ucb_optimism}]
Fix $e \in E$ and $t \ge 1$. If $N_e(t) = 0$, then 
$\hat{p}_e(t) = 1 > p_e$ deterministically. If 
$N_e(t) \ge 1$, the event 
$\{\hat{p}_e(t) < p_e\}$ implies 
$\bar{p}_e(t) - p_e < -\sqrt{\rho \ln t / N_e(t)}$, which 
has probability at most $t^{-3}$ by the lower-tail half of 
Lemma~\ref{lem:two_sided_concentration}. A union bound over 
$|E|$ edges yields $\Pr(\mathcal{O}_t^c) \le |E|\,t^{-3}$. 
The cumulative-regret claim follows from $r(\pi_t) \le 1$ 
and $\sum_{t=1}^{\infty} t^{-3} < \infty$.
\end{proof}

\begin{lemma}[UCB Convergence Rate]
\label{lem:ucb_convergence}
Under Assumption~\ref{assum:strict_bounds} with $\rho \ge 2$, 
for any edge $e \in E$ and any episode $t$ with 
$N_e(t) > 0$,
\begin{equation}
    \delta_e(t) \;\le\; 
    2\sqrt{\frac{\rho \ln t}{N_e(t)}}
\end{equation}
with probability at least $1 - t^{-3}$.
\end{lemma}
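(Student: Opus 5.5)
The plan is to reduce to Lemma~\ref{lem:two_sided_concentration} and then check how the two clamps act on $\hat p_e(t)$. Fix $e$ and $t$ with $N_e(t) \ge 1$, and write $r = \sqrt{\rho \ln t / N_e(t)}$. Let $G$ be the good event $\{|\bar p_e(t) - p_e| \le r\}$. We only need the upper tail of Lemma~\ref{lem:two_sided_concentration}. Applying its union-bound/Hoeffding argument one-sidedly gives $\Pr(\bar p_e(t) - p_e > r) \le t^{1-2\rho} \le t^{-3}$ for $\rho \ge 2$. So the event $\{\bar p_e(t) \le p_e + r\}$ has probability at least $1 - t^{-3}$, and we work on it.

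On this event, the unclipped index satisfies $\bar p_e(t) + r \le p_e + 2r$. The key step is that $\mathrm{clip}(\cdot, p_{\min}, 1)$ never pushes the value above $\max(\bar p_e(t) + r,\, p_{\min})$. We split into two cases.

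\emph{Case 1: the lower clamp is inactive}, i.e.\ $\bar p_e(t) + r \ge p_{\min}$. Then $\hat p_e(t) = \min(\bar p_e(t) + r, 1) \le p_e + 2r$, so $\delta_e(t) \le 2r$.

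\emph{Case 2: the lower clamp is active.} Then $\hat p_e(t) = p_{\min} \le p_e$ by Assumption~\ref{assum:strict_bounds}. Hence $\delta_e(t) \le 0 \le 2r$.

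The upper clamp at $1$ only decreases $\hat p_e(t)$, so it can only help. In both cases, then, $\delta_e(t) \le 2\sqrt{\rho \ln t / N_e(t)}$.

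The one point needing care is the data-dependence of $N_e(t)$. This is already handled inside Lemma~\ref{lem:two_sided_concentration} through the fictitious i.i.d.\ coupling and the union bound over $n \in \{1,\dots,t\}$, so no new argument is required. I expect no real obstacle. The only subtlety is to note that the bound is one-sided ($\delta_e$ may be negative off $\mathcal{O}_t$) and that the lower clamp cannot create overestimation, since $p_{\min} \le p_e$. Using the upper tail alone also gives the sharper probability $1 - t^{-3}$ instead of $1 - 2t^{-3}$.
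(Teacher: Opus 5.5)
Your proposal is correct and follows the paper's route: apply the upper-tail half of Lemma~\ref{lem:two_sided_concentration} to get $\bar p_e(t) - p_e \le r$ with probability at least $1 - t^{-3}$, then check each clamp regime of $\hat p_e(t)$. Your separate treatment of the active lower clamp ($\delta_e(t) \le 0 \le 2r$) is slightly cleaner than the paper's proof, which asserts $\delta_e(t) \le \bar p_e(t) + r - p_e$ in every case even though that inequality fails when the lower clamp binds; the conclusion is unaffected because $\delta_e(t) \le 0$ in that case.
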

\begin{proof}[Proof of Lemma~\ref{lem:ucb_convergence}]
Fix $e \in E$ and $t$ with $N_e(t) \ge 1$. Let 
$U_e(t) = \bar{p}_e(t) + \sqrt{\rho \ln t / N_e(t)}$ be 
the unclamped UCB, so 
$\hat{p}_e(t) = \mathrm{clip}(U_e(t), p_{\min}, 1)$. 
Clamping at $p_{\min}$ only decreases $\hat p_e(t)$ below 
$p_e$ (giving $\delta_e(t) \le 0$); clamping at $1$ gives 
$\delta_e(t) = 1 - p_e \le U_e(t) - p_e$ since $U_e(t) \ge 1$ 
in that case; the unclamped case gives 
$\delta_e(t) = U_e(t) - p_e$ directly. In all cases,
\[
\delta_e(t) \le U_e(t) - p_e 
= (\bar{p}_e(t) - p_e) + \sqrt{\rho \ln t / N_e(t)}.
\]
By the upper-tail half of 
Lemma~\ref{lem:two_sided_concentration}, 
$\bar{p}_e(t) - p_e \le \sqrt{\rho \ln t / N_e(t)}$ with 
probability at least $1 - t^{-3}$, giving 
$\delta_e(t) \le 2\sqrt{\rho \ln t / N_e(t)}$.
\end{proof}

\section{Path-Conditional Observation Bound 
(Lemma~\ref{lem:path_obs})}
\label{app:path_obs_proof}

For brevity, let $N_{e_i}^\pi(t)$ denote the number of 
selections of $\pi$ within the first $t$ episodes in 
which edge $e_i$ at position $i$ was observed.

\begin{lemma}[Path-Conditional Observation Bound]
\label{lem:path_obs}
Under Assumption~\ref{assum:strict_bounds}, fix a path 
$\pi$ and position $i \in \{1, \dots, |\pi|\}$. With 
probability at least $1 - T^{-2}$,
\begin{equation}
    N_{e_i}^\pi(t) \;\ge\; \frac{Q_i(\pi)}{2}\,M_\pi(t)
    \qquad \text{for all } t \le T \text{ with } 
    M_\pi(t) \ge \frac{24 \ln T}{Q_i(\pi)}.
\end{equation}
\end{lemma}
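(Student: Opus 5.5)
Fix $\pi=(e_1,\dots,e_L)$ and a position $i$. The plan is to index by the selections of $\pi$ rather than by episodes, and reduce the claim to a multiplicative Chernoff bound for an i.i.d.\ Bernoulli sequence, followed by a union bound over the possible counts.

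\textbf{Reduction to i.i.d.\ indicators.} For $m=1,2,\dots$ let $\tau_m$ be the episode of the $m$-th selection of $\pi$. Let $Z_m=\mathbf{1}\{X_{e_1}=\dots=X_{e_{i-1}}=1 \text{ at episode } \tau_m\}$ indicate that $e_i$ is observed on that selection. Because edge outcomes in each episode are drawn independently of the past, and $\tau_m$ is a stopping time with respect to the history, the fictitious-sampling coupling used in Lemma~\ref{lem:two_sided_concentration} applies. Concretely, attach to $\pi$ a fixed i.i.d.\ sequence of outcome vectors consumed one per selection. Then $Z_1,Z_2,\dots$ are i.i.d.\ Bernoulli$(Q_i(\pi))$. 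Set $q=Q_i(\pi)>0$, which is positive by Assumption~\ref{assum:strict_bounds}. We then have $N^\pi_{e_i}(t)=\sum_{m=1}^{M_\pi(t)}Z_m$, so the claim reduces to showing
\[
\sum_{m=1}^{M} Z_m \ge \tfrac{q}{2}M \quad\text{for every } M \text{ with } \tfrac{24\ln T}{q}\le M\le T.
\]
This bound is deterministic in $M$, which removes the data-dependence of $M_\pi(t)$.

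\textbf{Chernoff and union bound.} For each fixed $M$, the lower multiplicative Chernoff bound with $\delta=1/2$ gives
\[
\Pr\Bigl(\textstyle\sum_{m\le M}Z_m<\tfrac{q}{2}M\Bigr)\le \exp(-qM/8).
\]
When $M\ge 24\ln T/q$ this is at most $T^{-3}$. A union bound over the at most $T$ admissible values of $M$ gives total failure probability at most $T\cdot T^{-3}=T^{-2}$. Since $M_\pi(t)\le t\le T$, every relevant $t$ is covered, which yields the statement.

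\textbf{Main obstacle.} The delicate step is rigorously justifying that the $Z_m$ are i.i.d.\ despite the adaptive selection times. The selection of $\pi$ at episode $t$ depends on past outcomes, including outcomes on earlier selections of $\pi$ itself. I would handle this exactly as in Lemma~\ref{lem:two_sided_concentration}, by the stack-of-rewards construction: the realized outcome vector at the $m$-th pull of $\pi$ is the $m$-th entry of a pre-drawn table. This gives a legitimate coupling because the distribution of the whole interaction is unchanged.

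If that construction is felt to be too informal, the fallback is a martingale argument. Define $D_m=Z_m-q$. This is a martingale difference sequence with respect to the filtration at the selection times, and Freedman's inequality, or the multiplicative Azuma--Chernoff variant for $\{0,1\}$-valued adapted sequences with conditional mean $q$, gives the same $\exp(-qM/8)$ tail. Either route gives the constant $24$ with room to spare, since only $qM/8\ge 3\ln T$ is needed.
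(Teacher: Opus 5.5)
Your proposal is correct and follows essentially the same route as the paper: a fictitious-sampling (stack) coupling makes the per-selection observation indicators Bernoulli$(Q_i(\pi))$, the lower multiplicative Chernoff bound with $\delta = 1/2$ gives $\exp(-Q_i(\pi)M/8) \le T^{-3}$ once $M \ge 24\ln T/Q_i(\pi)$, and a union bound over the at most $T$ values of $M$ finishes the argument. The only difference is cosmetic: you pre-draw one stack of outcome vectors per path, which yields exactly i.i.d.\ indicators, whereas the paper uses per-edge stacks shared across the whole history and argues that each indicator is conditionally Bernoulli$(Q_i(\pi))$ given the past.
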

\begin{proof}
For each edge $e \in \pi$, let 
$Y^e_1, Y^e_2, \dots \sim \mathrm{Bernoulli}(p_e)$ be the 
exogenous fictitious-sampling 
sequence~\citep{lattimore2020bandit}, with each $Y^e_n$ 
giving the outcome of $e$ on its $n$-th attempt across the 
entire history. On the $j$-th episode in which $\pi$ is 
selected, the observation indicator $I_j$---which equals 
$1$ iff edges $e_1, \dots, e_{i-1}$ all succeed on that 
episode---is determined by reading the next unused outcome 
from each of the $i-1$ prefix sequences. Because the prefix 
edges' fictitious sequences are mutually independent, $I_j$ 
is, conditionally on the past, a Bernoulli draw with 
parameter $Q_i(\pi) = \prod_{k < i} p_{e_k}$.

The counter $M_\pi(t)$ is data-dependent: past outcomes 
influence UCB estimates and hence subsequent path 
selections. We handle this via a union bound over the 
possible values of $M_\pi(t)$. For each fixed $m \in 
\{1, \dots, T\}$, the partial sum $\sum_{j=1}^m I_j$ 
stochastically dominates a $\mathrm{Binomial}(m, Q_i(\pi))$ 
random variable along the coupled sequences, so the 
multiplicative Chernoff bound with $\delta = 1/2$ gives
\begin{equation}
    \Pr\!\left(\sum_{j=1}^{m} I_j < 
    \frac{Q_i(\pi)\,m}{2}\right)
    \;\le\; \exp\!\left(-\frac{Q_i(\pi)\,m}{8}\right) 
    \;\le\; T^{-3}
\end{equation}
whenever $m \ge 24 \ln T / Q_i(\pi)$. A union bound over 
the at most $T$ such values of $m$ gives total failure 
probability at most $T^{-2}$. On the complementary event, 
$N_{e_i}^\pi(t) = \sum_{j=1}^{M_\pi(t)} I_j 
\ge Q_i(\pi)\,M_\pi(t)/2$ for every $t \le T$ with 
$M_\pi(t) \ge 24 \ln T / Q_i(\pi)$.
\end{proof}

\section{Path-Dependent Regret of \optinll{} 
(Theorem~\ref{thm:path_dependent_regret})}
\label{app:thm_proof}

\begin{proof}[Proof of Theorem~\ref{thm:path_dependent_regret}]
Let $\mathcal{G}$ denote the joint event on which 
Lemmas~\ref{lem:ucb_optimism},~\ref{lem:ucb_convergence}, 
and~\ref{lem:path_obs} all hold for every $t \le T$, edge, 
path, and position. We split 
$\mathbb{E}[R(T)] = \mathbb{E}[R(T)\mathbf{1}_\mathcal{G}] + 
\mathbb{E}[R(T)\mathbf{1}_{\mathcal{G}^c}]$.

\textbf{Off $\mathcal{G}$.} Optimism failures and per-edge 
UCB convergence failures each contribute $O(|E|)$ via 
$\sum_t t^{-3} < \infty$. Path-conditional observation 
failures occur with probability $T^{-2}$ per 
(path, position) pair; summing over at most 
$|\Psi(v_s, v_g)| L_{\max}$ pairs and multiplying by the 
maximum regret $T$ gives 
$O(|\Psi(v_s, v_g)| L_{\max}/T)$.

\textbf{On $\mathcal{G}$.} Optimal paths incur zero regret, 
and Lemma~\ref{lem:per_path_selections} bounds $M_\pi(T)$ 
for each $\pi \notin \Psi^*$. Multiplying by $\Delta(\pi)$, 
distributing, and using $\Delta(\pi) \le 1$ to absorb the 
warm-up term:
\begin{equation}
    \mathbb{E}[R(T)\mathbf{1}_\mathcal{G}] 
    \;\le\; \sum_{\pi \notin \Psi^*} \Delta(\pi)\,M_\pi(T) 
    \;\le\; \sum_{\pi \notin \Psi^*}
    \frac{8\rho\,C(\pi)^2 \ln T}{\Delta(\pi)} +
    \sum_{\pi \notin \Psi^*} \frac{24 \ln T}{Q_L(\pi)}.
\end{equation}
Adding the off-$\mathcal{G}$ contribution gives the 
theorem.
\end{proof}

\section{Further Theoretical Discussion}
\label{app:theory_discussion}

\paragraph{Variance awareness.}
Vial et al.~\cite{vial2022minimax} show that 
Hoeffding-based UCB is order-wise suboptimal for cascading 
bandits in the problem-independent regret, with 
variance-aware (Bernstein) confidence sets attaining the 
minimax lower bound. \optinll{} inherits this suboptimality 
on \srp{} instances with high-reliability paths. We use 
Hoeffding bounds to keep the path-level decomposition in 
Lemma~\ref{lem:subopt_implies_overest} self-contained; 
extending to Bernstein would augment $C(\pi)$ with edge 
variance terms in place of $1/\sqrt{Q_i(\pi)}$, recovering 
the variance-aware rates at the path level. We leave this 
as future work.

\paragraph{Warm-up cost.}
The term $\sum_\pi 24 \ln T / Q_L(\pi)$ counts the 
selections of each suboptimal path before 
Lemma~\ref{lem:path_obs} takes effect at the deepest 
position. Since $Q_L(\pi) \ge p_{\min}^{L-1}$, this term 
can grow exponentially in $L$ and dominate the bound on 
long paths. CombCascade sidesteps this via a prefix 
argument (Lemma~1 of~\cite{kveton2015combinatorial}): a 
suboptimal path can be distinguished from the optimum on 
a prefix alone. Adapting this to the path level---where 
$C(\pi)$ is defined over the full path---is a natural 
future direction.

\section{Quantum Repeater Network: SURFnet Abstraction}
\label{app:surfnet}

The 25-node abstraction of the SURFnet backbone provides a 
pragmatic baseline for near-term regional quantum-internet 
deployments~\citep{knight2011internet, 
chakraborty2020entanglement}. Nominal link lengths 
$L_{\text{nom}} \in [15, 100]\,\text{km}$ adhere to 
physical repeater spacing constraints; long-haul routes 
exceeding a single quantum hop (e.g., the approximately 
175 km Amsterdam--Groningen route) are logically 
segmented via 
intermediate hubs (Almere, Zwolle). Bidirectionality is 
enforced ($p_e$ identical in both directions per link) 
since each link uses a single physical fiber.

\textbf{Edge reliability.} For each instance, an effective 
length $L_{\text{eff}} = \max(1, L_{\text{nom}} + \eta_e)$ 
is generated with $\eta_e \sim \mathcal{N}(0, 4)$ km 
(modeling environmental drift and link-characterisation 
uncertainty). The success probability follows the standard 
fiber-attenuation model
\begin{equation}
    p_e = \exp(-\kappa L_{\text{eff}}), 
    \qquad \kappa = 0.02\,\text{km}^{-1},
\end{equation}
clipped to $[p_{\min}, 1]$ with 
$p_{\min} = 0.01$. The chosen $\kappa$ is optimistic 
relative to the standard $\kappa = 
0.046\,\text{km}^{-1}$~\citep{Wehner2018}, modeling 
near-future low-loss fiber or an effective transmission 
probability that absorbs multiplexing gains; this prevents 
extreme reward sparsity across multi-hop topologies.

\textbf{Endpoint sampling.} $(v_s, v_g)$ are sampled 
uniformly from node pairs separated by topological 
distance $\ge 3$, ensuring every instance requires 
combinatorial routing.

The complete node mapping is given in 
Table~\ref{tab:surfnet_nodes} and the bidirectional edges 
in Table~\ref{tab:surfnet_edges}.

\begin{table}[h]
\centering
\caption{Node index to city mapping for the 25-node SURFnet abstraction.}
\label{tab:surfnet_nodes}
\begin{tabular}{cl | cl | cl}
\toprule
\textbf{Index} & \textbf{City} & \textbf{Index} & \textbf{City} & \textbf{Index} & \textbf{City} \\
\midrule
0 & Amsterdam & 9 & Nijmegen & 18 & Apeldoorn \\
1 & Leiden & 10 & Arnhem & 19 & Amersfoort \\
2 & The Hague & 11 & Enschede & 20 & Dordrecht \\
3 & Delft & 12 & Groningen & 21 & Middelburg \\
4 & Rotterdam & 13 & Maastricht & 22 & Den Bosch \\
5 & Utrecht & 14 & Haarlem & 23 & Venlo \\
6 & Breda & 15 & Alkmaar & 24 & Almere \\
7 & Tilburg & 16 & Leeuwarden & & \\
8 & Eindhoven & 17 & Zwolle & & \\
\bottomrule
\end{tabular}
\end{table}

\begin{table}[h]
\centering
\caption{Bidirectional edges and nominal fiber lengths ($L_{\text{nom}}$) bounded by quantum constraints.}
\label{tab:surfnet_edges}
\begin{tabular}{llr | llr}
\toprule
\textbf{Node $u$} & \textbf{Node $v$} & \textbf{L (km)} & \textbf{Node $u$} & \textbf{Node $v$} & \textbf{L (km)} \\
\midrule
0 (Amsterdam) & 1 (Leiden) & 40 & 14 (Haarlem) & 15 (Alkmaar) & 35 \\
0 (Amsterdam) & 5 (Utrecht) & 40 & 0 (Amsterdam) & 15 (Alkmaar) & 40 \\
1 (Leiden) & 2 (The Hague) & 20 & 0 (Amsterdam) & 24 (Almere) & 30 \\
2 (The Hague) & 3 (Delft) & 15 & 5 (Utrecht) & 19 (Amersfoort) & 25 \\
3 (Delft) & 4 (Rotterdam) & 15 & 19 (Amersfoort) & 24 (Almere) & 45 \\
4 (Rotterdam) & 5 (Utrecht) & 60 & 24 (Almere) & 17 (Zwolle) & 70 \\
4 (Rotterdam) & 6 (Breda) & 55 & 19 (Amersfoort) & 18 (Apeldoorn) & 45 \\
5 (Utrecht) & 8 (Eindhoven) & 85 & 19 (Amersfoort) & 17 (Zwolle) & 65 \\
5 (Utrecht) & 9 (Nijmegen) & 80 & 18 (Apeldoorn) & 17 (Zwolle) & 40 \\
5 (Utrecht) & 10 (Arnhem) & 60 & 18 (Apeldoorn) & 10 (Arnhem) & 30 \\
6 (Breda) & 7 (Tilburg) & 25 & 18 (Apeldoorn) & 11 (Enschede) & 70 \\
7 (Tilburg) & 8 (Eindhoven) & 35 & 17 (Zwolle) & 12 (Groningen) & 100 \\
8 (Eindhoven) & 9 (Nijmegen) & 85 & 17 (Zwolle) & 16 (Leeuwarden) & 90 \\
8 (Eindhoven) & 13 (Maastricht) & 90 & 16 (Leeuwarden) & 12 (Groningen) & 60 \\
9 (Nijmegen) & 10 (Arnhem) & 15 & 4 (Rotterdam) & 20 (Dordrecht) & 25 \\
10 (Arnhem) & 11 (Enschede) & 80 & 20 (Dordrecht) & 6 (Breda) & 35 \\
11 (Enschede) & 12 (Groningen) & 100 & 20 (Dordrecht) & 21 (Middelburg) & 95 \\
0 (Amsterdam) & 14 (Haarlem) & 20 & 6 (Breda) & 21 (Middelburg) & 90 \\
1 (Leiden) & 14 (Haarlem) & 30 & 5 (Utrecht) & 22 (Den Bosch) & 55 \\
22 (Den Bosch)& 8 (Eindhoven) & 35 & 22 (Den Bosch) & 7 (Tilburg) & 25 \\
9 (Nijmegen) & 23 (Venlo) & 65 & 8 (Eindhoven) & 23 (Venlo) & 60 \\
23 (Venlo) & 13 (Maastricht) & 80 & & & \\
\bottomrule
\end{tabular}
\end{table}

\section{Greedy Policy Extraction by Algorithm}
\label{app:greedy_extraction}

The greedy policy at episode $t$ is extracted as follows:
\begin{itemize}
    \item \optinll{} and CombCascade: Dijkstra on the 
    empirical means $\bar p_e(t)$ (no UCB bonus).
    \item \bayesnll{}: Dijkstra on the posterior means 
    $\mathbb{E}[\tilde p_e(t)] = \alpha_e(t)/(\alpha_e(t) + \beta_e(t))$.
    \item $Q$-Learning: the greedy path under current 
    $Q$-values.
    \item RTDP and LRTDP: the greedy path extracted from 
    the current value function.
\end{itemize}
This isolates each algorithm's underlying policy from 
exploration noise.

\section{Baseline Implementation Details}
\label{app:baselines}

This appendix gives full implementation details for the 
six baselines used in Section~\ref{sec:experiments}. The 
main text gives short identifiers; here we record 
hyperparameters, initialization schemes, and protocol 
choices.

\subsection{CombCascade}
\label{app:baselines_combcascade}

CombCascade~\cite{kveton2015combinatorial} is the 
canonical CCB algorithm. It maintains UCB estimates with 
confidence radius $\sqrt{1.5 \ln(t-1) / T_{t-1}(e)}$ and a 
one-sided clamp at $1$. The original algorithm assumes an 
exact maximization oracle for 
$\arg\max_{\pi \in \Psi(v_s, v_g)} \prod_{e \in \pi} 
\hat p_e$; we implement this oracle via Dijkstra on the 
log-transformed weights $-\log \hat p_e$, identical to 
\optinll{}. The implementation differences from 
\optinll{} (initialization, confidence radius constant, 
clamping) are detailed in 
Appendix~\ref{app:combcascade_diffs}.

\subsection{RTDP}
\label{app:baselines_rtdp}

RTDP~\cite{Barto1993rtdp} is a stochastic-shortest-path 
solver that runs simulated trials from $v_s$ and updates 
state values via Bellman backups along the trajectory, 
biasing computation toward states reachable under the 
current greedy policy. We use a goal reward of $+1$ on 
absorption at $v_g$ and $0$ elsewhere; values are 
initialized via the hop-count heuristic 
$V_0(s) = \gamma^{d(s, v_g)}$ for consistency with 
$Q$-Learning, where $d(s, v_g)$ is the unweighted 
shortest hop count from $s$ to $v_g$.

\subsection{LRTDP}
\label{app:baselines_lrtdp}

LRTDP~\citep{Bonet2003lrtdp} extends RTDP with a 
convergence-labeling mechanism: states whose Bellman 
residual falls below $\varepsilon = 10^{-3}$ and whose 
greedy successors are all similarly converged are labeled 
\emph{solved} and excluded from subsequent trials. LRTDP 
is generally considered a strict improvement over RTDP. 
Reward and initialization follow RTDP.

\subsection{Q-Learning}
\label{app:baselines_qlearning}

Q-Learning is a standard tabular off-policy RL baseline 
with learning rate $\alpha_{QL} = 0.1$, discount factor 
$\gamma = 0.99$, and $\varepsilon$-greedy exploration. 
The exploration rate decays exponentially per episode as 
$\varepsilon_t = \max(0.05,\, 0.999^t)$, reaching the 
floor at $t \approx 3000$ episodes. Rewards are $+1$ on 
reaching $v_g$ and $-1$ on edge failure. Action-values are 
initialized via the hop-count heuristic 
$Q_0(s, a) = \gamma^{d(s, v_g)}$, where $d(s, v_g)$ is the 
unweighted shortest hop count from $s$ to $v_g$ (an 
optimistic initialization that encourages systematic 
exploration). The use of a discount factor and negative 
failure rewards introduces a slight objective bias 
relative to the pure \srp{} formulation, but reflects 
standard ``off-the-shelf'' RL deployment.

\subsection{CUCB}
\label{app:baselines_cucb}

CUCB (Combinatorial UCB without log-transform) isolates 
the contribution of the log-transform: it uses the same 
UCB estimates and Dijkstra solver as \optinll{}, but runs 
Dijkstra on the weights $w(e) = 1 - \hat p_e(t)$ rather 
than $-\log \hat p_e(t)$. Since minimizing 
$\sum_e (1 - \hat p_e)$ is not equivalent to maximizing 
$\prod_e \hat p_e$ except in the high-reliability limit 
$\hat p_e \to 1$ (with discrepancy growing in path 
length), CUCB optimizes the wrong objective. We include 
it to confirm that the log-transform is responsible for 
the optimality of Lemma~\ref{lem:structural} rather than 
a stylistic detail.

\subsection{Random}
\label{app:baselines_random}

Random selects a path via a random walk with cycle 
avoidance and dead-end restart: at each step, a neighbor 
is chosen uniformly at random among unvisited outgoing 
neighbors; if all neighbors are already visited, the walk 
restarts from $v_s$, retrying up to 1000 times. If no 
valid path is found, the episode is skipped and counts as 
a failure. This provides a worst-case performance floor.

\section{Additional Experimental Results}
\label{app:additional_experiments}
This appendix collects supplementary experimental results referenced in Section~\ref{sec:experiments}.

\subsection{Experiment 1: Bound Comparison}
\label{app:bound_comparison}
Figure~\ref{fig:bounds_comparison} compares the edge-level 
CombCascade bound (Eq.~\ref{eq:combcascade_bound}) and the 
path-level \optinll{} bound 
(Theorem~\ref{thm:path_dependent_regret}) against the 
empirical \optinll{} regret. The interpretation is given 
in Section~\ref{sec:exp1}.

\begin{figure}[h!]
\centering
\caption{Empirical comparison of regret bounds for 
\optinll{} across four domains. Each plot shows three 
curves on a $\log_{10}$ y-axis: the edge-level CombCascade 
bound (Eq.~\ref{eq:combcascade_bound}), the path-level 
\optinll{} bound (Theorem~\ref{thm:path_dependent_regret}),
and the empirical \optinll{} cumulative regret. The x-axis 
is the episode index $t$.}
\label{fig:bounds_comparison}
\begin{subfigure}[b]{0.24\textwidth}
    \centering
    \includegraphics[width=\textwidth, trim=0 0 0 5mm, clip]{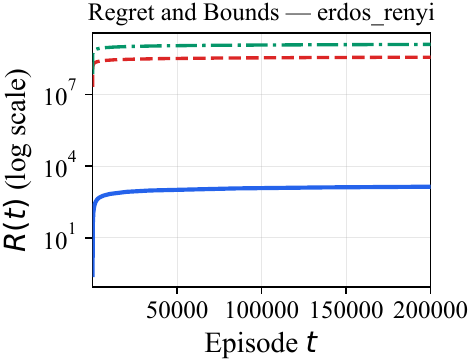}
    \caption{Erd\H{o}s-R\'{e}nyi}
    \label{fig:bounds_erdos_renyi}
\end{subfigure}
\hfill
\begin{subfigure}[b]{0.24\textwidth}
    \centering
    \includegraphics[width=\textwidth, trim=6mm 0 0 5mm, clip]{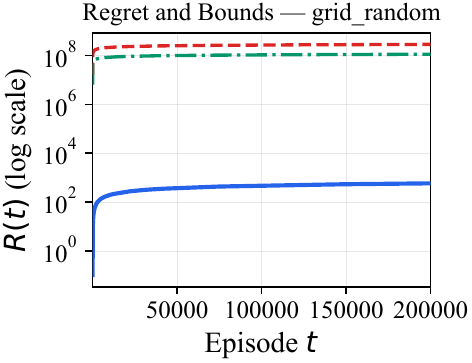}
    \caption{Grid World}
    \label{fig:bounds_grid_random}
\end{subfigure}
\hfill
\begin{subfigure}[b]{0.24\textwidth}
    \centering
    \includegraphics[width=\textwidth, trim=6mm 0 0 5mm, clip]{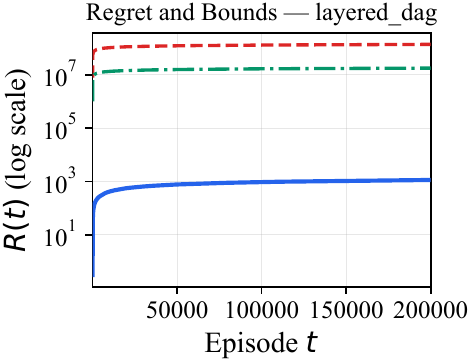}
    \caption{Layered DAG}
    \label{fig:bounds_layered_dag}
\end{subfigure}
\hfill
\begin{subfigure}[b]{0.24\textwidth}
    \centering
    \includegraphics[width=\textwidth, trim=6mm 0 0 5mm, clip]{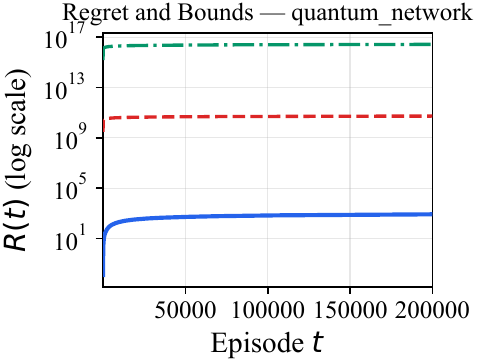}
    \caption{Quantum Network}
    \label{fig:bounds_quantum_network}
\end{subfigure}
\makebox[\textwidth][c]{%
    \includegraphics[width=1.0\textwidth]{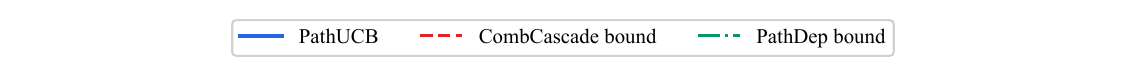}
}
\end{figure}

\subsection{Experiment 1: Full Numerical Results}
\label{app:exp1_table}

Table~\ref{tab:accumulated_regret} gives final cumulative 
regret, convergence rate, and log-linear fit parameters 
for every (algorithm, domain) pair in Experiment~1. The domain sizes where set as follows: 
\begin{itemize}
    \item \textbf{Erd\H{o}s-R\'{e}nyi:} $|V|=12$.
    \item \textbf{Layered DAG:} $k=4$, $w=3$.
    \item \textbf{Grid World:} Grid size $=3 \times 3$
    \item \textbf{Quantum Network:} As described in Appendix~\ref{app:surfnet}.
\end{itemize}

\begin{table}[t]
\centering
\caption{Cumulative regret at $T = 200{,}000$, averaged over 
10 topologies $\times$ 20 seeds (200 runs) per (algorithm, 
domain) pair. Graph size $|V|$ is shown in parentheses next 
to each domain. \optinll{} uses the theoretical exploration 
parameter $\rho = 2.0$; \bayesnll{}, CombCascade, and the 
baselines use their default settings. \emph{Conv.\ Rate} is 
the fraction of runs in which the greedy policy matched some 
$\pi^* \in \Psi^*$ for 10 consecutive episodes within $T$. 
\emph{Log-fit $a$} and $R^2$ are the slope and 
goodness-of-fit of $\mathbb{E}[R(T)] \approx a \ln T + b$ at 
40 log-spaced episodes.}
\label{tab:accumulated_regret}
\begin{tabular}{l l r r r r r}
\toprule
Graph Type & Agent & Final $R(T)$ & ($\pm$std) & Conv.\ Rate & Log-fit $a$ & $R^2$ \\
\midrule
\multirow{7}{*}{Erd\H{o}s-R\'{e}nyi} & PathTS & $99$ & $\pm 86$ & 100\% & 16 & 0.953 \\
 & CombCascade & $1,067$ & $\pm 303$ & 100\% & 169 & 0.995 \\
 & PathUCB & $1,382$ & $\pm 399$ & 100\% & 226 & 0.994 \\
 & Q-Learning & $8,916$ & $\pm 3,078$ & 100\% & 2,205 & 0.754 \\
 & RTDP & $31,025$ & $\pm 38,341$ & 38\% & 7,785 & 0.752 \\
 & LRTDP & $40,946$ & $\pm 43,472$ & 28\% & 10,275 & 0.752 \\
 & Random & $83,249$ & $\pm 28,353$ & 0\% & 20,893 & 0.752 \\
\midrule
\multirow{7}{*}{Layered DAG} & PathTS & $101$ & $\pm 102$ & 100\% & 15 & 0.992 \\
 & CombCascade & $891$ & $\pm 374$ & 100\% & 168 & 0.975 \\
 & PathUCB & $1,138$ & $\pm 457$ & 100\% & 220 & 0.971 \\
 & Q-Learning & $6,303$ & $\pm 2,196$ & 100\% & 1,551 & 0.754 \\
 & RTDP & $19,636$ & $\pm 26,878$ & 40\% & 4,926 & 0.752 \\
 & LRTDP & $31,659$ & $\pm 29,972$ & 22\% & 7,945 & 0.752 \\
 & Random & $60,056$ & $\pm 33,546$ & 0\% & 15,072 & 0.752 \\
\midrule
\multirow{7}{*}{Quantum Network} & PathTS & $37$ & $\pm 23$ & 100\% & 6 & 0.995 \\
 & CombCascade & $660$ & $\pm 438$ & 100\% & 143 & 0.968 \\
 & PathUCB & $835$ & $\pm 574$ & 100\% & 188 & 0.960 \\
 & Q-Learning & $2,104$ & $\pm 1,230$ & 100\% & 512 & 0.799 \\
 & Random & $4,819$ & $\pm 5,128$ & 0\% & 1,209 & 0.752 \\
 & RTDP & $5,292$ & $\pm 6,435$ & 2\% & 1,328 & 0.752 \\
 & LRTDP & $5,359$ & $\pm 6,418$ & 6\% & 1,345 & 0.752 \\
\midrule
\multirow{7}{*}{Grid World} & PathTS & $42$ & $\pm 33$ & 100\% & 6 & 0.997 \\
 & CombCascade & $472$ & $\pm 190$ & 100\% & 93 & 0.965 \\
 & PathUCB & $588$ & $\pm 232$ & 100\% & 119 & 0.962 \\
 & Q-Learning & $4,929$ & $\pm 1,757$ & 98\% & 1,213 & 0.756 \\
 & RTDP & $19,100$ & $\pm 16,932$ & 20\% & 4,793 & 0.752 \\
 & LRTDP & $22,381$ & $\pm 17,748$ & 16\% & 5,616 & 0.752 \\
 & Random & $22,729$ & $\pm 13,399$ & 0\% & 5,704 & 0.752 \\
\bottomrule
\end{tabular}
\end{table}

\subsection{Experiment 3: Scaling Analysis}
\label{app:exp3}

We study how performance scales with graph size 
($|V|$, $|E|$) and edge-reliability range. Since 
\optinll{} with $\rho = 1$ outperformed CombCascade on 
all five domains in Experiment~2, we focus on \optinll{} 
and \bayesnll{}.

\subsubsection{Scaling with Graph Size}
\label{app:scaling_graph}

We sweep two structurally different axes 
(Figure~\ref{fig:scaling_full}a,b): Erd\H{o}s--R\'enyi 
with $|V| \in \{6, 8, 10, 12, 15, 18, 20, 25, 30\}$ at 
edge probability $0.4$ (path count super-polynomial in 
$|V|$), and Layered DAG with a fixed 6-layer spine and 
width $w \in \{2, \dots, 8\}$ (path count $w^6$, 
polynomial in $w$). We use $T = 100{,}000$ episodes.

On both domains, \optinll{} and \bayesnll{} exhibit 
approximately linear regret growth in $|E|$. In dense 
graphs $|E| = O(|V|^2)$, so this translates to polynomial 
scaling in $|V|$.

\subsubsection{Scaling with Edge Reliability Range}
\label{app:scaling_pmin}

On Erd\H{o}s--R\'enyi graphs with $|V| = 15$, we sweep
$p_{\min} \in \{0.05, 0.15, 0.25, 0.35, 0.45, 0.55, 0.59\}$
with $p_{\max} = p_{\min} + 0.4$. The support span $p_{\max}-p_{\min}=0.4$ is held fixed throughout, so the sweep isolates the location of the reliability interval
on $[0,1]$ as the source of variation.
Topologies and seeds are matched across $p_{\min}$ values.

With the span $p_{\max}-p_{\min}$ fixed, increasing
$p_{\min}$ shrinks the relative spread of edge reliabilities
($p_{\max}/p_{\min} \to 1$). Path reliabilities
$P(\pi)=\prod_{e\in\pi} p_e$ therefore cluster, and the per-path gaps
$\Delta(\pi)$ contract. Theorem~\ref{thm:path_dependent_regret} predicts regret scaling
with $1/\Delta(\pi)$, and the number of episodes needed to separate a
suboptimal path scales with $1/\Delta(\pi)^2$ (Lemma~\ref{lem:per_path_selections}), so
the observed super-linear growth of \optinll{} is consistent with the
gap-dependent terms of the bound. This effect is partially offset by improved
observability: higher $p_{\min}$ raises the prefix reliabilities $Q_i(\pi)$,
so deeper edges are observed more often per episode. The net trend in
Figure~\ref{fig:scaling_full}c indicates that gap contraction dominates.

\begin{figure}[t]
\centering
\caption{Final cumulative regret $R(T)$ at 
$T = 100{,}000$ versus graph size on Erd\H{o}s--R\'enyi 
(a) and Layered DAG (b), and versus $p_{\min}$ on 
Erd\H{o}s--R\'enyi with $|V| = 15$ (c).}
\label{fig:scaling_full}
\begin{subfigure}[b]{0.32\textwidth}
    \centering
    \includegraphics[width=\textwidth]{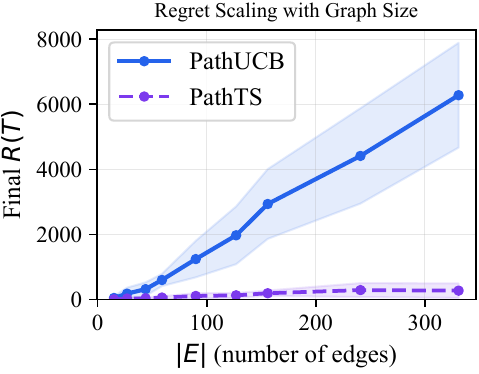}
    \caption{ER: $|E|$}
    \label{fig:scaling_er_edges}
\end{subfigure}
\hfill
\begin{subfigure}[b]{0.32\textwidth}
    \centering
    \includegraphics[width=\textwidth]{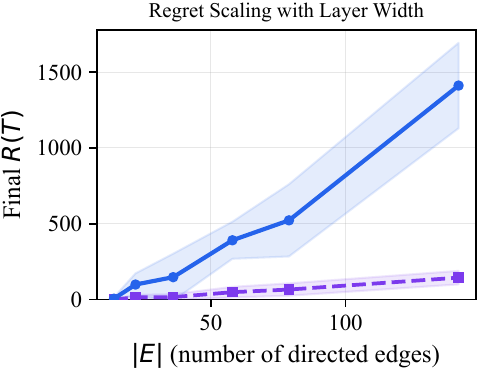}
    \caption{DAG: $|E|$}
    \label{fig:scaling_dag_edges}
\end{subfigure}
\hfill
\begin{subfigure}[b]{0.32\textwidth}
    \centering
    \includegraphics[width=\textwidth]{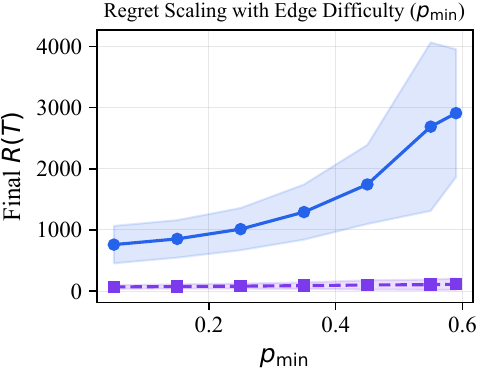}
    \caption{ER(15): $p_{\min}$}
    \label{fig:scaling_pmin}
\end{subfigure}
\end{figure}

\clearpage
\newpage

\end{document}